\newtheorem{theorem}{Theorem}
\newtheorem{lemma}{Lemma}
\newtheorem{proposition}{Proposition}
\newtheorem{assumption}{Assumption}
\theoremstyle{definition}
\newtheorem{definition}{Definition}
\newtheorem{remark}{Remark}
\newtheorem{example}{Example}
\newcommand{\R}{\mathbb{R}}
\newcommand{\mF}{\mathcal{F}}
\newcommand{\mG}{\mathcal{G}}
\newcommand{\mB}{\mathcal{B}}
\newcommand{\mE}{\mathcal{E}}
\newcommand{\mN}{\mathcal{N}}
\newcommand{\dE}{\mathds{E}}
\newcommand{\Ep}{\mathrm{E}}
\renewcommand{\hat}{\widehat}
\renewcommand{\tilde}{\widetilde}
\newcommand{\pr}{\mbox{Pr}}
\DeclareMathOperator{\Var}{Var}
\DeclareMathOperator{\Cov}{Cov}
\title[Higher-order Structure Prediction in Evolving Graph Simplicial Complexes]{Understanding Higher-order Structures in Evolving Graphs: \\A Simplicial Complex based Kernel Estimation Approach}
\author{Manohar Kaul \and Masaaki Imaizumi}
\address{$^1$Indian Institute of Technology Hyderabad / $^2$The University of Tokyo}
\begin{document}

\maketitle

\begin{abstract}
Dynamic graphs are rife with higher-order interactions, such as co-authorship relationships and protein-protein interactions in biological networks, that naturally arise between more than two nodes at once. In spite of the ubiquitous presence of such higher-order interactions, limited attention has been paid to the higher-order counterpart of the popular pairwise link prediction problem. Existing higher-order structure prediction methods are mostly based on heuristic feature extraction procedures, which work well in practice but lack theoretical guarantees. Such heuristics are primarily focused on predicting links in a static snapshot of the graph. Moreover, these heuristic-based methods fail to effectively utilize and benefit from the knowledge of latent substructures already present within the higher-order structures. 
In this paper, we overcome these obstacles by capturing higher-order interactions succinctly as \textit{simplices}, model their neighborhood by face-vectors, and develop a nonparametric kernel estimator for simplices that views the evolving graph from the perspective of a time process (i.e., a sequence of graph snapshots). Our method substantially outperforms several baseline higher-order prediction methods. As a theoretical achievement, we prove the consistency and asymptotic normality in terms of the Wasserstein distance of our estimator using Stein's method.
\end{abstract}

\section{Introduction}
Numerous types of networks like social~\citep{Nowell2007}, biological~\citep{Airoldi06}, and chemical reaction networks~\citep{Weg1911} are highly dynamic, as they evolve and grow rapidly via the appearance of new interactions, represented as the introduction of new links / edges between the nodes of a network. Identifying the underlying mechanisms by which such networks evolve over time is a fundamental question that is not yet fully understood. Typically, insight into the temporal evolution of networks has been obtained via a classical inferential problem called \emph{link prediction}, where given a snapshot of the network at time $t$ along with its linkage pattern, the task is to assess whether a pair of nodes will be linked at a later time $t' > t$.

While inferring pairwise links is an important problem, it is oftentimes observed that most of the real-world graphs exhibit \emph{higher-order group-wise interactions} that involve more than two nodes at once. 
Examples illustrating human group behavior involve a co-author relationship on a single paper and a network of e-mails to multiple recipients.
In nature too, one can observe several proteins interacting together in a biological network simultaneously.


In spite of their significance, in comparison to single edge inference, relatively fewer works have studied the problem of predicting higher-order group-wise interactions. 
\cite{Benson2018} originally introduced a \emph{simplex} to model group-wise interactions between nodes in a graph. They proposed predicting a \emph{simplicial closure event}, whereby an \emph{open simplex} (with just pairwise interactions between member vertices) transitions to a \emph{closed simplex} (where all member vertices participate in the higher-order relationship simultaneously), in the near future. Figure~\ref{fig:simplex} (Middle) shows an example of such a transition from an \emph{open triangle} to a \emph{closed} one.
Recently, several works have proposed modeling higher-order interactions as \emph{hyperedges} in a hypergraph~\citep{Xu2013,Muhan2018,Yoon2020,Patil0M20}.
Given a hyperedge $h_t$ at time $t$, the inference task is to predict the future arrival of a new hyperedge $h_{t'}$ at time $t' > t$, which covers a larger set of vertices than $h_t$ and contains all the vertices in $h_t$. Figure~\ref{fig:simplex} (Right) illustrates this hyperedge prediction task.

Although prediction models based on either \emph{simplicial closure event} prediction or \emph{hyperedge} arrival, deal with higher-order structures, they both fail to capture the \emph{highly complex} and \emph{non-linear evolution} of higher-order structures over time. Both these kinds of models have limitations.
First, they predict structures from a \emph{single static snapshot} of the graph, thus not viewing the evolution process of adding new edges as a time process. 
Second, their feature extraction is mostly based on popular heuristics~\citep{adamic2003friends, brin2012reprint, jeh2002simrank, zhou2009predicting, barabasi1999emergence, BhatiaCNK19} that work well in practice but are not accompanied by strong theoretical guarantees.
In addition to the aforementioned shortcomings, hypergraph based methods model higher-order structures as hyperedges, which omit \emph{lower-dimensional substructures} present within a single hyperedge. As a consequence, they cannot distinguish between various substructure relationships.
For example, hyperedge $[A,B,C]$ in Figure~\ref{fig:simplex} (Right) cannot distinguish between group relationships like $[[A,B],[B,C],[A,C]]$ (a set of pairwise interactions) versus $[A,B,C]$ (all $A$, $B$ and $C$ simultaneously in a relationship).
Further, it is important to note that the hypergraph-based approach can be computationally inefficient.
To model a single simplex with $n$ vertices (inclusive of all possible subsets), the hypergraph model requires $2^n$ \emph{explicit} hyperedges (exponential in $n$), which makes it computationally prohibitive.


\textbf{Our Approach:}
To address the aforementioned problems, we develop a higher-order structure prediction method, by introducing the following two techniques: (i) a \textit{simplicial complex} representation for various events in a graph, and (ii) a \textit{kernel modelling method} for prediction with time evolving events.


A finite collection of non-empty sets $\Delta$ is called a \emph{abstract simplicial complex} if, for every set $S \in \Delta$, all its non-empty subsets $X \subseteq S$ also belong to $\Delta$. 
The set $S$ is termed a \emph{simplex} of $\Delta$.
Figure \ref{fig:simplex_definition} illustrates each simplex comprising of all its lower-dimensional sub-simplices.
We apply this notion to a graph and develop a \textit{graph simplicial complex} (GSC).
Since the GSC is closed under taking subsets, it  expresses higher-order relationships in a graph in a succinct manner.
Consequently, the \emph{outputs/artifacts} of a higher-order relationship can be captured using a \emph{hypergraph}, but the finer details of \emph{``who interacted with whom”} is best captured by a \emph{simplicial complex}.
\begin{figure}[H]
    \centering
    \includegraphics[width=0.6
    \hsize]{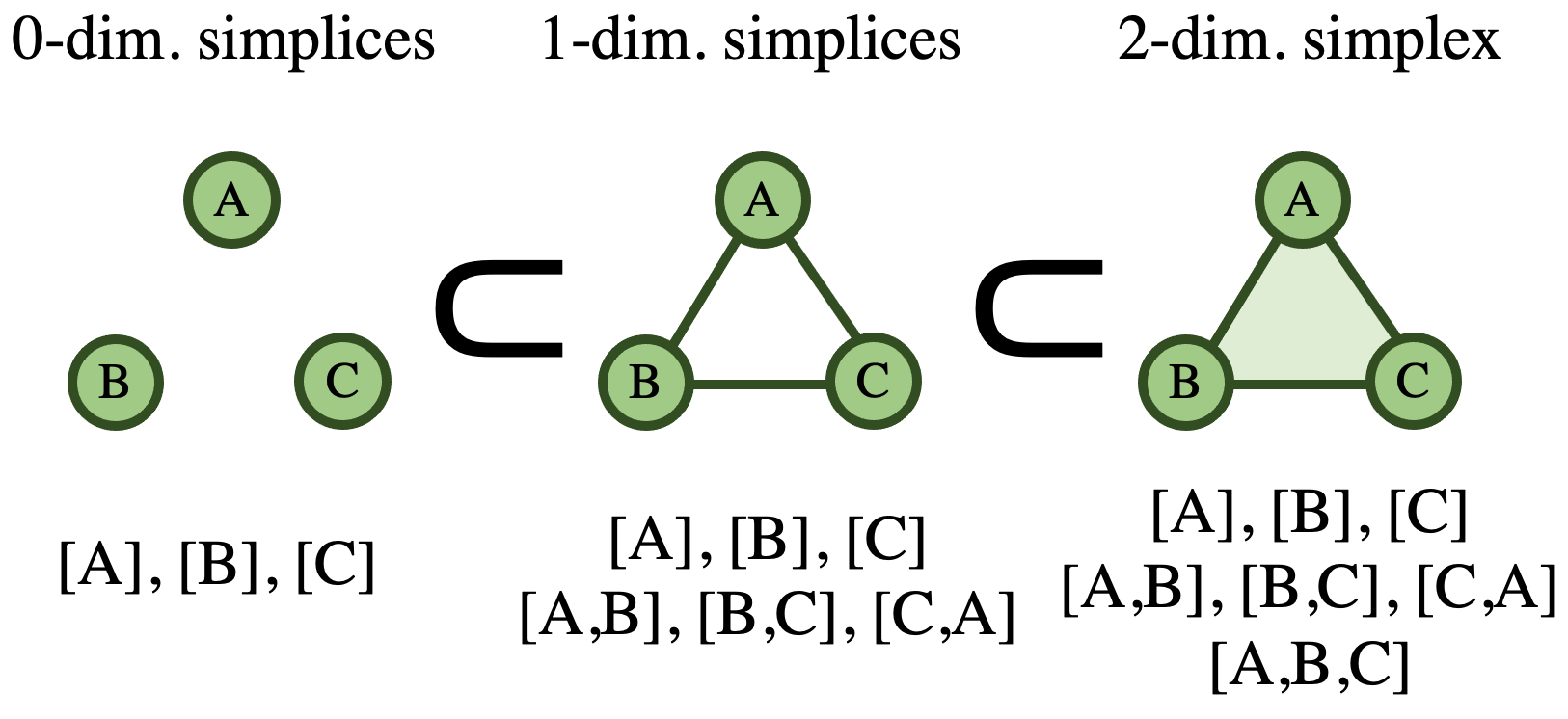}
    \caption{Illustration of simplices of at most $2$ dimensions. Each simplex contains all simplices of lower dimensions.\label{fig:simplex_definition}}
\end{figure}

\if 0
\textbf{(i) Organic Chemistry:} It is quite common to have the \textit{same} set of elements interacting with each other in different configurations, 
which result in very different functioning compounds \citep{ma2011incorporating}.
Specifically, R-thalidomide and S-thalidomide are two different configurations of thalidomide, where the R-form was meant to help sedate pregnant women, while the S-form unfortunately resulted in birth defects.
This is a famous example in stereo chemistry
to show the consequences of mistaking two extremely close configurations (differing by a single bond) as being the same. 
Structure prediction to avoid such phenomenon in drug synthesis allows chemists to achieve a much higher yield and avoid wastage of expensive resources.
\textbf{(ii) Gene expression networks}: Gene networks have nodes that represent genes and edges connect genes which have similar expression patterns \citep{zhang2005general}.
Subgraphs called \textit{modules} are tightly connected genes in such a gene expression network. 
Genomics research provides evidence that higher-order gene expression relationships (like second and third-order) and their measurements can have very important implications for cancer prognosis. 
When making structural predictions in these aforementioned examples, our simplicial complex based approach provides much more fine-grained control over competing methods by capturing subtler differences in configurations.
\fi

The \textit{kernel modeling} prediction approach is a method of constructing an estimator with a kernel function as a time series analysis of events.
At the outset, we regard the \emph{evolving graph}\footnote{We handle the \emph{incremental model} (edge insertions only) as opposed to the harder \emph{fully dynamic model} (edge insertions and deletions allowed) for which most previous methods too cannot provide theoretical guarantees.} as a \emph{time process} under the framework of nonparametric time series prediction.
Further, we design a kernel function based on the features of a GSC to predict the evolution of a given simplex to a higher-dimensional simplex at a future timestep. 
To design the kernel function, we utilize the combination of a \emph{face-vector}~\citep{Anders2006} (a well-established vector signature in combinatorial topology literature) and a novel \emph{scoring function}, which infers the affinity of sub-simplices based on their past interactions.




\begin{figure*}
	\centering
	\includegraphics[width=0.98\hsize]{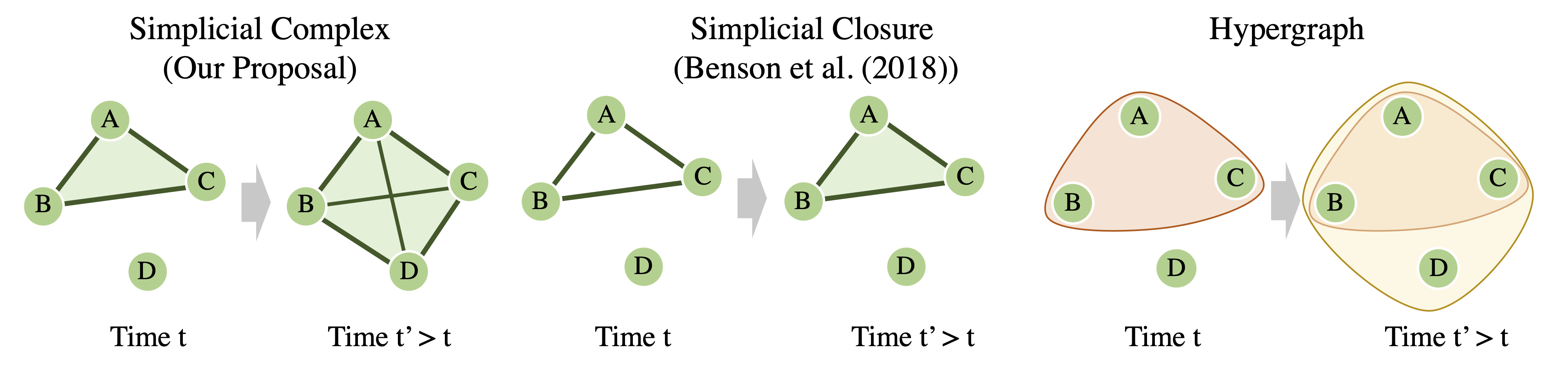}
	\caption{\textbf{[Left]} Given a 4-node graph, at time $t$, the 2-simplex $[A,B,C]$ also contains 1-simplices 
		$[A, B], [B, C]$ and $[A, C]$. At time $t'>t$, the 2-simplex \emph{evolves} (by connecting with $D$) 
		to a 3-simplex $[A,B,C,D]$ which additionally contains 1-simplices $[A, D], [B, D]$ and $[C, D]$, 
		along with 2-simplices $[A, B, D], [A, C, D]$ and $[B, C, D]$.
		\textbf{[Middle]} Simplex setting with \citet{Benson2018}. The method predicts $[A, B, C]$ (\emph{closed triangle}) 
		at time $t'>t$ from an $[A, B], [B, C]$ and $[A, C]$ (\emph{open triangle}) at time $t$.
		\textbf{[Right]} Hypergraph represents $[A, B, C]$ as a hyperedge, without any of its subsets. 
		It cannot distinguish between $[[A,B],[B,C],[A,C]]$ and $[A,B,C]$.} 
	\label{fig:simplex}
\end{figure*}

Our prediction method has the following advantages.
\begin{itemize}
  \setlength{\parskip}{0cm}
  \setlength{\itemsep}{0cm}
    \item It extracts appropriate features of higher-order structures with reasonable computational complexity.
    \item Consistency and asymptotic normality of our estimator are proved, while many of the existing studies do not exhibit such theoretical guarantees.
    \item In our experiments, our method significantly gains in prediction accuracy and computational time in comparison to the baselines.
\end{itemize}

\textbf{Real-world examples using simplicial complexes:}
We further motivate our problem by describing a few significant real-world applications of simplicial complexes.

\textbf{(i) Numerical simulation in plasma physics:} 
The \emph{particle-in-cell} (PIC) method~\cite{evans_1957} is a numerical simulation in intense laser-plasma physics, which traces particles employing $n$-body methods.
The behavior of plasma is largely determined by \emph{particle-particle} (PP) interactions~\cite{Shouci2005}.
More specifically, group-wise $n$-ary interactions are studied under several \emph{interaction cross-sections}~\cite{Martinez2019}, where each order-$n$ cross section focuses on a $n$-ary interaction. 
Since it is important to distinguish pairwise-particle interactions and a group-wise interaction between $n$ particles, an approach with simplices is necessary.
\textbf{(ii) Protein dynamics in structural biology and biochemistry:}
Proteins are composed of amino acids linked by covalent peptide bonds, and they are modeled by \emph{protein structure networks}~\cite{bfgp_els_2012}, whose structure is directly correlated to the function of the protein~\cite{ranjan_2014}.
Especially, higher-order interplay between \emph{amino acid groups} are modeled by ``protein sectors"~\cite{halabi_2009}.
Simplices are suitable for modeling the complicated interactions in the sectors.
A more detailed description of these two examples is given in the supplementary material (SM).

\subsection{Related Studies}

\textbf{Single link prediction:} Most literature that predicts a single edge/link can be broadly classified as based on: (i) heuristics, (ii) random-walks, or (iii) graph neural networks (GNNs). 
(i) Heuristic methods comprise of Common neighbors, Adamic-adar~\citep{adamic2003friends}, 
PageRank~\citep{brin2012reprint}, SimRank~\citep{jeh2002simrank}, resource allocation~\citep{zhou2009predicting}, preferential attachment~\citep{barabasi1999emergence}, persistence homology based ranking~\citep{BhatiaCNK19}, and similarity-based methods \citep{liben2007link,lu2011link}. 
(ii) Random walk based methods consist of DeepWalk~\citep{perozzi2014deepwalk} , Node2Vec~\citep{grover2016node2vec} and SpectralWalk~\citep{sharma2020}. 
(iii) Finally, for both link prediction and node classification tasks, recent works are mainly GNN-based methods such as VGAE~\citep{kipf2016variational}, WYS~\citep{abu2018watch}, and SEAL~\citep{zhang2018link}. 

\textbf{Higher-order link prediction:}
Figure \ref{fig:simplex} summarizes our approach and the related studies.

\citet{Benson2018} are the first to introduce a \emph{higher-order link prediction problem} where they study the likelihoods of future higher-order group interactions as \emph{simplicial closure} events (explained earlier).
Despite the novelty, the task proposed by~\citet{Benson2018} is more limited than ours.
Our problem setting requires just a single simplex $\sigma$ in order to predict a higher-dimensional simplex $\tau$, which contains $\sigma$ as a \emph{face / subset}, whereas~\citet{Benson2018} requires the presence of \emph{all} constituent $\sigma$ faces in order to predict $\tau$. For example, in~\citet{Benson2018} (also shown in Figure~\ref{fig:simplex} (Middle)) all faces $[A,B]$, $[B,C]$ and $[A,C]$ (open triangle) need to be present in order to predict a closed triangle $[A,B,C]$. Contrastingly, in our approach, just a single face/edge like $[A,B]$ or $[B,C]$ or $[A,C]$, suffices to predict its evolution to $[A,B,C]$. Figure~\ref{fig:simplex} (Left) illustrates an additional example of our proposal to predict a $3$-simplex $[A,B,C,D]$ given only one of its faces $[A,B,C]$.

Furthermore, there are studies using \emph{hypergraphs} which also help naturally represent 
group relations~\citep{Xu2013,Muhan2018,Yoon2020,Patil0M20}. 
Especially, to represent higher-order relationships,~\cite{Yoon2020} proposed $n$-projected graphs. 
For larger $n$, i.e., higher-order groups, the enumeration of subsets, and keeping track of node co-occurrences quickly becomes infeasible.
In comparison to a hypergraph, our GSC is closed under taking subsets, which enables us to better encode more information for improved inference.


\section{Preliminary: Graph Simplicial Complex}

\begin{figure}
	\centering
	\includegraphics[width=0.6\linewidth]{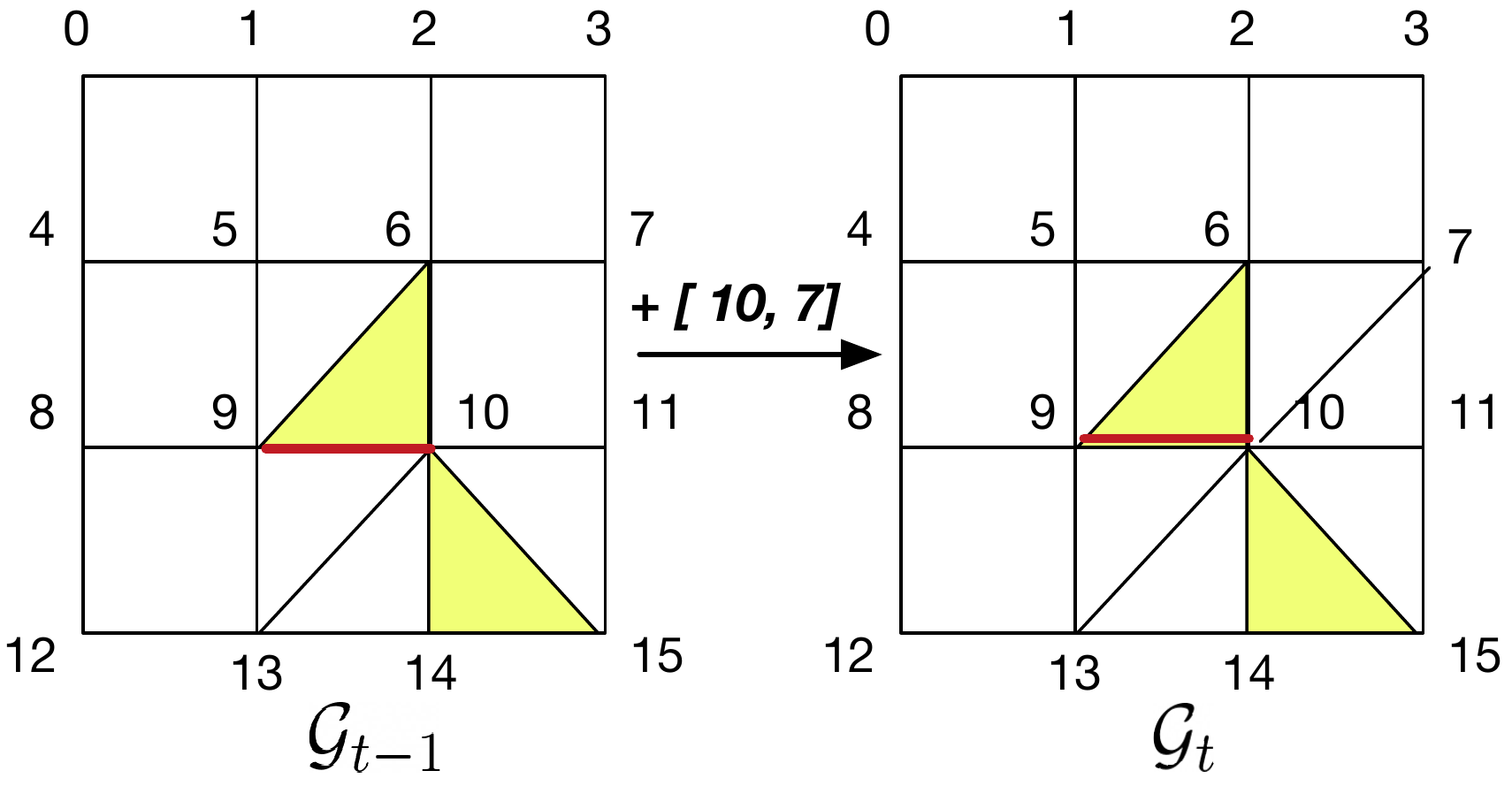}
	\setlength\abovecaptionskip{0pt}
	\vskip -0.1cm
	\caption{Example of evolution of GSC $\mG$. The yellow triangles are 2-simplices. In the $k$-ball around $1$-simplex $[9,10]$ (in red), a $1$-simplex [10,7] is added at time $t$. }
	\label{fig:gnn}
\end{figure}

We start with a general notion of an \textit{abstract simplicial complex} (ASC), then define a \textit{simplex} using ASCs. We specialize this definition to graphs and define a \emph{graph simplicial complex} (GSC).
\begin{definition}[Abstract simplicial complex and simplex]
	An \textit{abstract simplicial complex} (ASC) is a collection $A$ of finite non-empty sets, such that if $\sigma$ is an element of $A$, then so is every non-empty subset of $\sigma$. The element $\sigma$ of $A$ is called a \textit{simplex} of $A$; its \emph{dimension} is one less than the number of its elements.
\end{definition}

Now, we analyze graphs using the definition of ASCs.
Let $G=(V,E)$ be a finite graph with vertex set $V$ and edge set $E$. A \emph{graph simpicial complex (GSC)} $\mG$ on $G$ is an ASC consisting of subsets of $V$. In particular, $\mG$ is a \emph{collection of subgraphs of G}.
With graphs, we denote a $d$-dimensional simplex (or $d$-simplex) of a GSC by $\sigma^{(d)} = [v_0,v_1,\dots,v_d]$. 
Each non-empty subset of $\sigma^{(d)}$ is called a \emph{face} of $\sigma^{(d)}$. 

We define several notions related to GSCs, that are useful for describing the evolution of graphs.

\begin{definition}[Filtered GSC]
	For $I \subset \mathds{N}$, a \emph{filtered GSC} indexed over $I$ is a family $(\mG_t)_{t \in I}$ of GSCs such that for every $t \leq t'$ in $I$, $\mG_t \subset \mG_{t'}$ holds. 
\end{definition}
Obviously, $\mG_{t_0} \subset  \mG_{t_1} \subset \dots \mG_{t_n}$ is a discrete filtration induced by the arrival times of the simplices: $\mG_{t_i} \setminus \mG_{t_{i-1}} = \sigma_{t_i}$. This depicts a higher-order analogue of an evolving graph (incremental model), which allows \emph{attaching}
new simplices at each time-step to an existing GSC to build a new GSC.
A filtered GSC $\mG_{t,p}$ for the last $p$ discrete time steps is defined as $\mG_{t,p} := (\mG_{t'})_{t'=t-p}^t  = (\mG_{t-p}, \dots, \mG_t)$, where $\mG_{t'} \supset \mG_{{t'}-1}$.

We define a notion for dealing with the neighborhood around a given simplex $\sigma^{(d)} \in \mG$.
We introduce a set of all simplices of dimension $d'$ or less from $\mG$, i.e., $\mG^{(d')}_{-} := \{ \sigma^{(d)}  \in \mG \mid  d \leq d' \}$.
$\mG^{(0)}_{-}$ is a vertex set and $\mG^{(1)}_{-}$ is the set of edges and vertices.
We also write $i \sim j$ whenever vertices $i$ and $j$ are adjacent in $\mG^{(1)}_{-}$, and write $i \sim_k j$ to indicate that vertex $j$ is \emph{$k$-reachable} from $i$, i.e., there exists a path of length at most $k$, connecting $i$ and $j$ in $\mG^{(1)}_{-}$.
Then, we define a ball around $\sigma^{(d)}$.
\begin{definition}[$k$-ball centered at vertex and simplex] \label{def:kball_simplex}
	At time $t$, we define a $k$-\textit{ball centered at vertex} $i$ by $B_{t,k}(i) := \{ j : i \sim_k j \text{ and } i,j \in \mG^{(0)}_{t-}  \}$, and
	a $k$-\textit{ball centered at a simplex} $\sigma^{(d)}$ as $B_{t,k}( \sigma^{(d)} ) := \bigcup_{i : \mathrm{Vert}(\sigma^{(d)})} B_{t,k}(i)$, where $\mathrm{Vert}(\sigma^{(d)})$ denotes all vertexes in $\sigma^{(d)}$.
\end{definition}
Now, we define a \emph{sub-complex} $\mG'_t( \sigma^{(d)} ) \subseteq \mG_t$ as the GSC that contains all the simplices in $\mG_t$ spanned by the vertices in the $k$-ball $B_{t,k} (\sigma^{(d)})$.

\section{Predicting Higher-order Simplices}

We consider the prediction of a simplex's arrival in the setting described below.
Consider a filtered GSC $\mG_{t,p}$. 
At time $t$, given a $d$-dimensional simplex $\sigma^{(d)} = [v_0,\cdots, v_d]$, we predict the formation of a $(d+1)$-simplex $\tau^{(d+1)} =[v_0,\cdots, v_d, \tilde{v}]$ with a new vertex $\tilde{v} \in B_{t,k}(\sigma^{(d)})$.
Here, we restrict $\tilde{v}$ to be $k$-reachable from $\sigma^{(d)}$. 
To find out which simplices and vertices are most likely to appear in $\tau^{(d+1)}$ at time $t+1$, we need to design features for $\sigma^{(d)}$ and $\tilde{v}$.



\subsection{Feature Design for Simplex}

We develop a feature design of a simplex associated with the notion of $k$-balls.
The design is organized into two main elements: (i) a face-vector with a sub-complex, and (ii) a scoring function.

\textit{(i) Face vector of sub-complex}:
We first define a {face-vector} of a fixed GSC. The face-vector is an important \emph{topological invariant}\footnote{A topological invariant is a property that is preserved by \emph{homeomorphisms}.} of the GSC.
\begin{definition}[face-vector]
	\label{def:fvec}
	A combinatorial statistic of $\mG$ is the face-vector (or $f$-vector) as
	$
	f(\mG) = (f_{-1},f_0, \dots, f_{d-1})
	$, 
	where $f_k = f_k(\mG)$ records the number of $k$-dimensional faces $\sigma^{(k)} \in \mG$. 
\end{definition}
We then define the feature of a simplex $\sigma^{(d)}$ at time $t$, denoted by
$    N_t( \sigma^{(d)})= f(\mG'_t( \sigma^{(d)} ))$.
In words, the feature is compactly represented as the face-vector of sub-complex $\mG'_t( \sigma^{(d)} )$. Our face-vector representation of a node's neighborhood can be considered as a \emph{higher-order analogue} of the \emph{Weisfeiler-Lehman} (WL) kernel~\cite{Shervashidze11} on unlabeled graphs, which for each vertex, iteratively aggregates the vertex degrees of its immediate neighbors to compute a unique vector of the target vertex that captures the structure of its extended neighborhood.

\textit{(ii) Scoring function:} 
The purpose of this function is to extract the features of $\tilde{v}$ using its proximity to $\sigma^{(d)}$.
To this end, we begin by describing \emph{affinity} between two vertices. 
Given two vertices $v, v' \in \mG^{(0)}_{-}$,
we denote by $s(v,v')$ the weighted sum of all past co-occurrences of vertices $v$ and $v'$ in $\sigma^{(d)}$, where the weight is $d$ from $\sigma^{(d)}$.
We then devise a scoring function $h(\cdot,\cdot)$ that assigns an integral score to the possible introduction of a vertex $\tilde{v}$ to a $d$-simplex $\sigma^{(d)} = [v_0,\cdots, v_d]$ as 
\begin{align}
	h_t(\sigma^{(d)} , \tilde{v}) = \sum_{i=0}^{d} s(v_i,\tilde{v}).
\end{align}
It describes \emph{higher co-occurrence} between $\sigma^{(d)}$  and $\tilde{v}$ at time $t$, indicates a {higher likelihood} of forming a $(d+1)$-simplex $\tau^{(d+1)} = [\sigma^{(d)},\tilde{v} ]$ together at a future time $t+1$. We give a higher score to past co-occurrences of vertex pairs in higher dimensional simplices.

\textbf{Feature vector:} Finally, for a given $d$-simplex $\sigma^{(d)}$ at time $t$, and a possible introduction of a new vertex $\tilde{v} \in B_{t,k}(\sigma^{(d)})$, we assign a feature vector 
\begin{align}
	F_t(\sigma^{(d)},\tilde{v} )  = (N_t( \sigma^{(d)}), h_t(\sigma^{(d)} , \tilde{v}) ).
\end{align}



We denote the set of all such possible $(\sigma^{(d)} , \tilde{v})$ pairs with their corresponding feature vectors equal to $F$ as $P_t(\sigma^{(d)}, F  )$. 
Furthermore, among the pairs in $P_t(\sigma^{(d)}, F  )$, we denote by 
$P^\tau_t(\sigma^{(d)}, F)$ those set of pairs with feature vectors equal to $F$ that \emph{actually} form 
$\tau^{(d+1)} = [  \sigma^{(d)} , \tilde{v}  ]$ at time $t$, i.e., $ \sigma^{(d)} $ appears as a face
in $\tau^{(d+1)}$ at time $t$.
\if 0
to form $(d+1)$-simplices of the form $[  \sigma^{(d)} , \tilde{v}  ]$ with corresponding feature vector $f$, for 
$\tilde{v} \in B_{t-1,k}(\sigma^{(d)})$ (i.e., the $k$-ball around $\sigma^{(d)}$ at time $t-1$). These feature vectors also persist at time $t$. 
We denote the set of such \emph{persistent feature vectors} spanning across time steps $t-1$ and $t$, as $P_t(f)$. The set of persistent feature vectors where at time $t$, $\sigma^{(d)}$ \emph{actually} appears as a face in $\tau^{(d+1)} = [  \sigma^{(d)} , \tilde{v}  ]$ is denoted by 
$P^\tau_t(F)$. 
\fi
Note the distinction that not all $d$-simplices counted in 
$P_t(\sigma^{(d)},F)$ end up being \emph{promoted} to higher $(d+1)$-simplices in the next time step. 
Table \ref{table:notation} provides a list of notations.

\begin{table*}
	\centering
    \begin{small}
	\caption{Notation table \label{table:notation}}
	\begin{tabular}{ll}
		\toprule
		\textbf{Basic} & \\ \hline
		$G=(V,E)$ & graph with vertex set $V$ and edge set $E$\\
		$\mG$ & graph simplicial complex (collection of subgraphs of $G$)\\
		$\sigma^{(d)} = [v_0,...,v_d]$ & $d$-dimensional simplex ($d$-simplex)\\
		$\tau^{(d+1)} = [v_0,...,v_d,\tilde{v}]$ & $d+1$-simplex
		for prediction \\
		$\mG_{t,p} = \{\mG_{t-p},...,\mG_t\}$ & GSCs from the previous $p$ time steps \\
		$\mG_-^{(d')} = \{\sigma^{(d)} \in \mG \mid d \leq d'\}$ & set of simplices of dimension $d'$ or less \\\hline
		\textbf{Local simplex} & \\\hline
		$B_{t,k}(i)= \{j: i \sim_k j, ~i,j \in \mG_{t-}^{(0)}\}$ & $k$-ball centered at vertex $i$ \\
		$B_{t,k}(\sigma^{(d)}) = \bigcup_{i: \mathrm{Vert}(\sigma^{(d)})}B_{t,k}(i)$ & $k$-ball centered at simplex $\sigma^{(d)}$ \\
		$\mG_t'(\sigma^{(d)}) $ & all simplices from $\mG_t$ spanned by vertices in $B_{t,k}(\sigma^{(d)})$\\\hline
		\textbf{Feature of simplex} & \\\hline
		$f_k = f_k(\mG)$ & total number of $k$-simplices $\sigma^{(k)} \in \mG$ \\
		$f(\mG) = (f_{-1},f_0,...,f_{d-1})$& face-vector of $\mG$\\
		$N_t(\sigma^{(d)}) = f(\mG_t'(\sigma^{(d)}))$ & feature of $\sigma^{(d)}$ \\
		$s(v,v')$ & weighted sum of past co-occurrences of $v,v' \in \sigma^{(d)}$ \\
		$h_t(\sigma^{(d)},v) = \sum_{i=0}^d s(v_i,\tilde{v})$ & scoring function\\
		$F_t(\sigma^{(d)},\tilde{v} )  = (N_t( \sigma^{(d)}), h_t(\sigma^{(d)} , \tilde{v}) )$ & feature vector \\
		$P_t(\sigma^{(d)},F)$ & set of $\sigma^{(d)},\tilde{v}$ with corresponding feature $F$\\
		\bottomrule
	\end{tabular}
	    \end{small}
\end{table*}


\subsection{Prediction Model and Kernel Estimator}

For the prediction, we define an indicator variable that displays the appearance of a new simplex.
Given a $d$-simplex $\sigma^{(d)} = [v_0,\cdots, v_d] \in \mG_t$, the arrival at time $t+1$ of a $(d+1)$-simplex $\tau^{(d+1)} =[v_0,\cdots, v_d, \tilde{v}]$ with 
a new vertex $\tilde{v} \in B_{t,k}(\sigma^{(d)})$ 
is captured by the following variable
\begin{equation}
	Y_{t+1}(\tau^{(d+1)} ) := 
	\begin{cases}
		1 & \text{if $\sigma^{(d)}$ is a face of $\tau^{(d+1)}$   }, \\
		0 & \text{otherwise}.
	\end{cases}
\end{equation}

\textbf{Prediction model}:
Our approach for the prediction is to model the indicator variable.
Namely, we assume that the indicator variable follows the following distribution:
\begin{align} \label{eq:model}
	Y_{t+1}(\tau^{(d+1)} ) \mid \mG_{t,p} \sim \text{Bernoulli}(g( F_t(\sigma^{(d)},\tilde{v} )   ))
\end{align}
where $0 \leq g(\cdot) \leq 1$ is a function of the feature vector $F_t(\sigma^{(d)},\tilde{v} )$.
In words, the indicator variable $Y_{t+1}(\tau^{(d+1)} )$ 
is Bernoulli distributed with success probability given by function $g( F_t(\sigma^{(d)},\tilde{v} ) )$, conditioned on having seen the last $p$ states of GSC $\mG_t$.
This model describes that the appearance probabilities for two simplices $\sigma_i$ and $\sigma_j$ are likely to be similar, if their feature vector is also similar.


\noindent\textbf{Estimator with Kernels}:
We utilize a kernel method to estimate the success probability of the model (Equation~\ref{eq:model}) based on observed simplices at time $t$.
Let $\mG_{t}(d)$ be a set of $d$-dimensional simplices at time $t$ from $\mG_{t,p}$.
Also, for brevity, let $F$ represent a feature  $F_t(\sigma^{(d)},\tilde{v}_m)$ with some $t, \sigma^{(d)}$ and $\tilde{v}_m$ subject to  $\tilde{v}_m \in B_{t,k}(\sigma_i^{(d)})$.
Let $\lVert F - F'  \rVert_1$ denote the $L_1$-distance between two feature vectors, and also define a $L_1$-ball $\Gamma(F,\delta) := \{F': \|F - F'\|_1 \leq \delta\}$.

We define our kernel function $K(\cdot,\cdot)$ as follows.
With two features $F$ and $F'$, we define it as
\begin{align}
	\label{eq:kernel}
	&K \left( F,  F' \right):= \frac{  \mathds{I} \left\lbrace 
		F = F' 
		\right\rbrace 
		+ \beta  \mathds{I} 
		\left\lbrace  
		\lVert F - F' \rVert_1  \leq \delta   
		\right\rbrace        }
	{1 + \beta \vert \Gamma(  F, \delta )  \vert},
\end{align}
where $\beta > 0$ is the bandwidth parameter, and $\mathds{I}$ is the indicator function.
Given the feature $F$ 
with only integer components, 
we are interested in only those \emph{close by feature vectors} that are either exactly the same as $f$ or 
lie within an $L_1$-ball of radius $\delta$ centered on $f$. This explains the choice of our kernel function with discrete indicator variables.


Now, we define our estimator.
At time $T$, we fix $\sigma^{(d)}$ and $\tilde{v}$ and set $F = F_T(\sigma^{(d)},\tilde{v})$.
Then, we consider a set of observed feature at time $T$ as
\begin{align*}
    \mF_T := \bigcup_{t'=T-p}^T \left\{ [\sigma^{(d)}_j, \tilde{v}_n]: \sigma^{(d)}_j \in \mG_{t'}{(d)}, \tilde{v}_n \in B_{t',k}(\sigma_j^{(d)}) \right\}.
\end{align*}
Our estimator of $g(F)$ in (Equation~\ref{eq:model}) is written as follows:
\begin{align}
	&\tilde{g}_T \left( F  \right) := \frac{ \sum_{F' \in \mathcal{F}_T}
		K \left( F , F' \right)
		\cdot Y_{t'+1}(F' ) }
	{ \sum_{F' \in \mathcal{F}_T }K \left( F , F'\right)}.\label{eq:est}
\end{align}

\textbf{Time-complexity of our estimator}:
As both the face-vectors and counts $P_{t+1}$ / $P^{(\tau)}_{t+1}$ (updated in a data cube) are computed simultaneously, they incur the same time overhead.
At time $t$, for a $d$-simplex $\sigma^{(d)}$ it takes $O(|V_k| + |E_k|)$ time to compute a $k$-ball around $\sigma^{(d)}$, where $V_k$ and $E_k$ are the set of vertices and edges in the $k$-hop subgraph of a vertex. We must check this $k$-ball of $\sigma^{(d)}$ against $p |\mathcal{G}_{t-}^{(d)}|$ number of simplices (with dimension at most $d$) from the previous $p$ time steps, by intersecting against them to get counts for the face vector and data cube. Each intersection test takes $O(|V_k|)$ time. 
Recall that $f_d(\mathcal{G}_t)$ denoted the total number of $d$-simplices in $\mathcal{G}_t$.
$V_k$ and $E_k$ are the set of vertices and edges in the $k$-hop subgraph of a vertex.
We must check this $k$-ball of $\sigma^{(d)}$ against $p |\mathcal{G}_{t-}^{(d)}|$ number of simplices (with dimension at most $d$) from the previous $p$ time steps.
So, the entire computation across $T$ time chunks has a time complexity of 
$O( Tp f_d(\mathcal{G}_t)   ( |V_k| + |E_k|) )  |\mathcal{G}_{t-}^{(d)}|)$.

\textbf{Storage-complexity of our estimator}: 
The storage cost of evolving graphs per window of size $p$ is $O(|\mathcal{G}_{t,p}|)$.
For a single simplex, our estimator requires storing: (i) a pair of integer counts, namely $(P_t(\cdot,\cdot), P^{\tau}_t(\cdot,\cdot))$, in a datacube,  which costs $O(1)$ and (ii) a $d+1$-dimensional face vector which takes storage $O(d+1)$. Recall, the total number of simplices is denoted by $f_d(\mathcal{G}_t)$. We arrive at a total storage cost of 
$O(d f_d(\mathcal{G}_t) + |\mathcal{G}_{t,p}| )$.


\section{Theoretical Property of the Estimator}




We show that our estimator has theoretical validity: (i) consistency and (ii) asymptotic normality.
(i) The consistency guarantees $\tilde{g}_T$ achieves zero error as $T$ increases by converging to $g$.
(ii) The asymptotic normality implies that the error $\tilde{g}_T - g$ converges to a normal distribution, which is useful to evaluate the size of the error and can be applied to statistical tests and confidence analysis.
Both properties are very important in statistics \citep{van2000asymptotic}.



\subsection{Consistency}

We study the consistency of our estimator.
To discuss the property of the estimators with GSCs, it is necessary to organize the \textit{Markov property}, that the GSC evolution process clearly exhibits.
It is well-known that there exists a set of irreducible closed communication classes $C$ in the state space $\mathcal{S}$.
We denote the time of entering class $C$ by $T_C$ and the event as $\mE(T_C)$.  
Let $S_C$ denote the event $S_t \in C$, where $S_t$ is the state of the Markov chain at time $t$. 
Then, $\mE(T_C) \cap S_C$ is the event that the chain enters class $C$ at time $T_C$ and remains in that communication class indefinitely.




With the event $S_C$, we provide the bias-variance decomposition of $\tilde{g}_T - g$,
which is common for theoretical analysis of estimators.
The bias represents an error due to the expressive power of the model and the variance represents the over-fitting error due to algorithm uncertainty.
By analyzing these terms separately, we can analyze the overall prediction error.
We define two functions as
\begin{align*}
    &\hat{h}_T(F) = (T-p)^{-1} \sum_{t=p}^{T-1} \sum_{j=1}^{\lvert \mG_t(d) \rvert} {\lvert \mG_t(d) \rvert}^{-1}
{\lvert P^\tau_{t+1} ( \sigma^{(d)}_j, F) \rvert}, \\
    &\hat{d}_T(F) = (T-p)^{-1} \sum_{t=p}^{T-1} \sum_{j=1}^{\lvert \mG_t(d) \rvert} {\lvert \mG_t(d) \rvert}^{-1}{\lvert P_{t+1} ( \sigma^{(d)}_j, F)  \rvert}.
\end{align*}
For the sake of brevity, we fix $F$ and denote $\tilde{g}_T(F)$ by $\tilde{g}_T$.
Similarly, $g,\hat{h}_T$ and $\hat{d}_T$ are used.
Also, we define a term $B_T(F,C) = \dE [ \hat{h}_T \mid S_C  ] /\dE [ \hat{d}_T \mid S_C ] - g $.
Then, by Proposition \ref{prop:decomp} in SM, we decompose $\tilde{g}_T - g$ as
\begin{align}
    \tilde{g}_T - g =  \mathcal{V}_T + \mathcal{B}_T, \label{eq:diff}
\end{align}
where $\mathcal{V}_T$ is a variance term
\begin{align*}
     \mathcal{V}_T :=\{ [\hat{h}_T - g \hat{d}_T] - \dE[ \hat{h}_T - g \hat{d}_T \mid S_C ]  \}/{\hat{d}_T},
\end{align*}
and $\mathcal{B}_T$ is a bias term
\begin{align*}
    \mathcal{B}_T :=	B_T(F,C) \dE[\hat{d}_T \mid S_C] /\hat{d}_T.
\end{align*}


To bound the variance term $\mathcal{V}_T$, 
we make an assumption that our Markov chain $X_t$ exhibits a \emph{$\alpha$-mixing} property which describes a dependent property of the dynamic process.
It is one of the most common and well-used assumptions describing time-dependent processes including dynamic graphs \citep{sarkar2014nonparametric}.
Precisely, we present the definition of \emph{$\alpha$-mixing}:
\begin{definition}[$\alpha$-mixing]
	\label{def:strong}
	A stochastic process $X_t$ is \textit{$\alpha$-mixing}, if a coefficient $\alpha(r)$, defined as
	\begin{align*}
	\alpha(r) = \sup_{|t_1-t_2| \geq r} 
	&\lbrace
	\lvert \pr(A \cap B) - \pr(A)\pr(B) \rvert  : A \in \Sigma(X^-_{t_1}), B \in \Sigma(X^+_{t_2} )
	\rbrace,	    
	\end{align*}
	satisfies $\alpha(r) \rightarrow 0$ as $r \to \infty$.
	Here,  $\Sigma(X^-_{t_1})$ and $\Sigma(X^+_{t_2})$ are the sigma-algebras of past and future events of the stochastic process up to and including $t_1$.
\end{definition}

This definition implies that time-dependent processes get \emph{close to independent} as time passes.
That is, events at time $t$ and $t+10000$ are close to independent, while events at time $t$ and $t+1$ can be correlated.
The simplest example is the evolution of stock prices in financial markets: the movement of stock prices today does not correlate with the movement of stock prices $10$ years ago.



To bound the bias term $\mB_T$, we impose a smoothness condition on $g$.
A similar assumption is often used in the problem of predicting links (e.g. Assumption 1 in \cite{sarkar2014}).
Our assumption is a general and weaker version of the common assumption.
\begin{assumption}[Smoothness on $g$]
	\label{ass}
	There exists a function $\kappa: \R \to \R$ in the \textit{Schwartz space} (i.e. it is infinitely differentiable and converging to zero faster than any polynomial as $x \to \pm \infty$) with $b > 0$ such that
	$
	| g(F) - g(F') | = O( \kappa(-\|F-F'\|_1/b) ), ~\mbox{as}~ b \to 0, ~ \forall F, F'.
	$
\end{assumption}

Then, we prove the consistency of $\tilde{g}_T$.
\begin{theorem}[Consistency]\label{thm:consistency_g}
	Suppose that the GSC filtration process is $\alpha$-mixing, $\beta = o(1)$, and Assumption \ref{ass} holds.
	Then, for any $F$ and conditional on $S_C$, our estimator $\tilde{g}_T(F)$ is well-defined with probability tending to $1$, and  $|\tilde{g}_T(F) -g(F)| \overset{p}{\to} 0 $ holds as $T \rightarrow \infty$. 
\end{theorem}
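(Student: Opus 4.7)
The plan is to use the bias-variance decomposition in (\ref{eq:diff}) and show that both $\mathcal{V}_T$ and $\mathcal{B}_T$ tend to zero in probability conditional on $S_C$. Before either step, I would establish that $\hat{d}_T$ is bounded away from zero in probability, so that the estimator is well-defined and the denominators in both terms are controlled. This follows from $\dE[\hat{d}_T \mid S_C]$ being strictly positive on the recurrent class $C$, combined with the ergodic argument used for the variance term below.

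For the variance term $\mathcal{V}_T$, I would invoke a weak law of large numbers for $\alpha$-mixing sequences. Both $\hat{h}_T$ and $\hat{d}_T$ are empirical means of the form $(T-p)^{-1}\sum_{t=p}^{T-1} X_t$, where $X_t$ is a uniformly bounded measurable functional of $\mG_{t,p}$; the $\alpha$-mixing hypothesis forces $\Cov(X_t, X_{t+r}) \to 0$ as $r \to \infty$ fast enough for a covariance-summability plus Chebyshev argument to give $\hat{h}_T - \dE[\hat{h}_T \mid S_C] \overset{p}{\to} 0$ and likewise for $\hat{d}_T$. Since $g(F)$ is a fixed constant in $T$, these imply $(\hat{h}_T - g \hat{d}_T) - \dE[\hat{h}_T - g \hat{d}_T \mid S_C] \overset{p}{\to} 0$; combining with the lower bound on $\hat{d}_T$ via Slutsky's theorem yields $\mathcal{V}_T \overset{p}{\to} 0$.

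For the bias term $\mathcal{B}_T$, the above step gives $\dE[\hat{d}_T \mid S_C]/\hat{d}_T \overset{p}{\to} 1$, so it suffices to show $B_T(F, C) \to 0$. Under the Bernoulli model (\ref{eq:model}), $\dE[Y_{t'+1}(F') \mid \mG_{t',p}] = g(F')$, so $\dE[\hat{h}_T \mid S_C]/\dE[\hat{d}_T \mid S_C]$ is a kernel-weighted average of $g(F')$ over feature vectors $F'$ inside the $L_1$-ball $\Gamma(F,\delta)$. Splitting the kernel (\ref{eq:kernel}) into its exact-match indicator and its $\beta$-weighted neighborhood indicator, the first contribution is exactly $g(F)$; for the second, Assumption~\ref{ass} bounds $|g(F') - g(F)| = O(\kappa(-\|F-F'\|_1/b))$, which together with $\beta = o(1)$, the finiteness of $|\Gamma(F,\delta)|$ on the integer-valued feature lattice, and the Schwartz decay of $\kappa$ drives the neighborhood contribution to zero, so $B_T(F,C) \to 0$.

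The main technical obstacle is the interplay between the $\alpha$-mixing LLN and the conditioning on the absorption event $S_C$: one must verify that the mixing coefficients are asymptotically preserved under this conditioning and that the conditional moments of $X_t$ are uniformly bounded in $t$, so that the Chebyshev-type argument still applies on the conditioned process. A secondary subtlety is matching the smoothness scale $b$ in Assumption~\ref{ass} with the bandwidth $\beta$ and radius $\delta$ so that the bias bound from smoothness actually vanishes rather than stalling at a positive constant; this coupling is not automatic from $\beta = o(1)$ alone and requires inspecting the rate at which $\kappa$ decays as its argument tends to $-\infty$.
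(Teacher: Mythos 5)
Your overall strategy matches the paper's: use the bias--variance decomposition \eqref{eq:diff}, establish that the denominator $\hat{d}_T$ converges in probability to a strictly positive constant $R(C)$ (so the estimator is well-defined and Slutsky applies), drive the variance term to zero via an $\alpha$-mixing law of large numbers, and control the bias term via Assumption~\ref{ass} and the bandwidth. The paper's Lemmas~\ref{lem:var_h_d} and~\ref{lemma:denom} and Proposition~\ref{prop:variance} play exactly the roles you assign to the covariance-summability-plus-Chebyshev argument and the ergodicity of the recurrent class.

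There is, however, a genuine mismatch in how you handle the bias, and it stems from misreading what $\hat{h}_T$ and $\hat{d}_T$ are. As displayed in the text before \eqref{eq:diff}, these are kernel-free sums of the raw counts $|P^\tau_{t+1}(\sigma_j^{(d)},F)|$ and $|P_{t+1}(\sigma_j^{(d)},F)|$; the paper first proves $|\tilde{g}_T - \hat{g}_T| = O(\beta)$ (Lemma~\ref{lem:bound_ghat_gtilde}) and then studies the kernel-free proxy $\hat{g}_T = \hat{h}_T/\hat{d}_T$. So your description of $\dE[\hat{h}_T \mid S_C]/\dE[\hat{d}_T \mid S_C]$ as a kernel-weighted average over $\Gamma(F,\delta)$ does not match these definitions, and your claim that ``the exact-match contribution is exactly $g(F)$'' is precisely what Lemma~\ref{lem:bias} shows to be false. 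Conditioning on $\mE(T_C)$ and applying the Bernoulli model gives (see \eqref{eq:counts}) an expected count of $|P_{t+1}(\sigma_j^{(d)},F)|\,g\bigl(F_t(\sigma_j^{(d)},\tilde{v}_n)\bigr)$, not $|P_{t+1}|\,g(F)$: the feature that parameterizes the Bernoulli probability is $F_t(\sigma_j^{(d)},\tilde{v}_n)$, which need not equal the recorded feature $F$ because of the time-indexing of the ball $B_{t-1,k}(\sigma_j^{(d)})$ versus when the count is tallied. The residual difference $g(F_t(\sigma_j^{(d)},\tilde{v}_n)) - g(F)$ is where Assumption~\ref{ass} is actually used, giving $B_T(F,C)=O(b)$. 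In your framing the smoothness assumption is invoked but does no work: if $\beta \to 0$ kills the entire $\beta$-weighted neighborhood, the exact-match term would give zero bias and Assumption~\ref{ass} would be superfluous, which contradicts its role in the theorem. You should restructure the bias step to (i) split off the $O(\beta)$ discrepancy between $\tilde{g}_T$ and the kernel-free proxy, and (ii) bound the proxy's bias via the feature mismatch $g(F_t(\sigma_j^{(d)},\tilde{v}_n)) - g(F)$ using Assumption~\ref{ass}, rather than attributing all bias to the kernel smoothing.

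Your closing observation about the coupling of $b$, $\beta$, and $\delta$ is apt in spirit but misdirected: the theorem requires $\beta = o(1)$ explicitly, while $b \to 0$ is absorbed into Lemma~\ref{lem:bias}'s hypothesis as $T \to \infty$; the two bandwidths enter through separate terms and do not need to be matched against each other, only each sent to zero.
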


\subsection{Asymptotic Normality}

We show the asymptotic normality of the proposed estimator. 
That is, we prove that the error of the estimator converges weakly to a normal distribution. 
This property allows for more detailed investigations, such as correcting for errors in estimators or performing statistical tests.

Technically speaking, we develop a distribution approximation result with \textit{Wasserstein distance} \citep{villani2008optimal} and Stein's method \citep{stein1986} to handle the dependency property of GSCs.
We are interested in approximating a random variable $Z_n$ by a Gaussian random variable, where $Z_n$ is a sum of $n$ mean-centered random variables $\{ A_i \}_{i=0}^n$,
where $A_i$ corresponds to a random variable which depends on the $i$-th $d$-simplex $\sigma_i^{(d)}$ in our GSC, which is \emph{dependent} on other $d$-simplices whose neighborhoods largely overlap with that of $\sigma_i^{(d)}$.
Here, let $d_w$ be the Wasserstein distance between the underlying distributions of the random variables, and $N$ is a standard Gaussian variable.
Then, we develop the general results for $Z_n$.
We provide the following theoretical result.
Its formal statement is Theorem 3, which is deferred to SM due to its complexity.
\begin{proposition}[Gaussian approximation for dependent variables; Simple version of Theorem \ref{thm:bound}] 
	\label{thm:bound_simple}
	Suppose $Z_n = \sum_{i=0}^n A_i$, where $\{A_i\}_{i=0}^n$ is generated by zero-mean random variables $X_0,X_1,\cdots, X_n$ satisfying the $\alpha$-mixing condition, such as $A_i = X_i /B_n$ with $ B_n = \dE[\sum_{i = 0}^n X_i^2]$.
	Also, suppose that $\pr \{ |X_i| \leq L \} =1$ holds for $i=1,\cdots,n$ with some constant $L>0$. 
	Then, with an existing finite constant $C > 0$, we have
	$d_w( Z_n, N) 
	\leq 
	C \left(
	\sum_{i=1}^{n} \dE |A_i|^3 + \frac{TL^3}{B_n^3} \sum_{i = 1}^{n-1}  n\alpha(n)
	\right)$.
\end{proposition}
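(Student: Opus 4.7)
The plan is to run Stein's method for the Wasserstein distance in its dependency-neighborhood form. For any $1$-Lipschitz test $h$, let $f_h$ solve the Stein equation $f_h'(x) - x f_h(x) = h(x) - \dE h(N)$; classical bounds give $\|f_h'\|_\infty \leq \sqrt{2/\pi}$ and $\|f_h''\|_\infty \leq 2$, so that
\begin{equation*}
d_w(Z_n, N) \;=\; \sup_{h \in \mathrm{Lip}(1)} \bigl|\dE[f_h'(Z_n) - Z_n f_h(Z_n)]\bigr|.
\end{equation*}
The whole task therefore reduces to bounding $\dE[f'(Z_n) - Z_n f(Z_n)]$ uniformly over Stein solutions $f$.

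Because the $A_i$ are not independent, I cannot apply the standard leave-one-out argument directly. Instead, for a cutoff $m$ to be chosen later I define, for each index $i$, a local neighborhood $W_i := \sum_{j : |j-i| \leq m} A_j$ and the far sum $Z_n^{(i)} := Z_n - W_i$. By the $\alpha$-mixing hypothesis together with $|X_i| \leq L$ (so $|A_i| \leq L/B_n$), the pair $(A_i, Z_n^{(i)})$ is nearly independent, quantitatively through $\alpha(m)$. Expanding $\dE[A_i f(Z_n)] = \dE[A_i(f(Z_n) - f(Z_n^{(i)}))] + \dE[A_i f(Z_n^{(i)})]$, I would then Taylor expand the first piece around $Z_n^{(i)}$: the first-order term, after summing over $i$, matches $\dE[f'(Z_n)]$ up to variance discrepancies controlled by the normalization $B_n = \dE[\sum_i X_i^2]$ (this is exactly where the condition $\sum_i \dE[A_i^2]=1$ kicks in), while the second-order Taylor remainder is bounded by $\|f_h''\|_\infty \cdot \dE[|A_i| W_i^2]$. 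Since $|A_j| \leq L/B_n$ and $|W_i|$ involves at most $O(m)$ summands, this contributes at the order of $\sum_i \dE|A_i|^3$ (after replacing each $|A_j|$ in $W_i^2$ by its a.s.\ bound and collapsing to third moments of $A_i$). The second piece is the covariance $|\dE[A_i f(Z_n^{(i)})]| = |\mathrm{Cov}(A_i, f(Z_n^{(i)}))|$ (using $\dE[A_i]=0$), which by the standard $\alpha$-mixing covariance inequality is $\leq C\|f\|_\infty \cdot (L/B_n) \cdot \alpha(m)$ per index.

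Summing the two contributions over $i$ gives the decomposition
\begin{equation*}
\bigl|\dE[f'(Z_n) - Z_n f(Z_n)]\bigr| \;\leq\; C_1 \sum_{i=1}^{n} \dE|A_i|^3 \;+\; C_2 \frac{L^3}{B_n^3}\, n\, \sum_{i=1}^{n-1} \alpha(m),
\end{equation*}
with absolute constants $C_1, C_2$ (only depending on the Stein bounds $\sqrt{2/\pi}$ and $2$). The cutoff $m$ is then tied to the filtration window length $T$ of the underlying GSC (so $m = m(T,n)$), which is what produces the prefactor $T$ in the stated bound. Taking the supremum over $1$-Lipschitz $h$ and absorbing constants yields the claim.

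The main obstacle is the bookkeeping inside the Taylor remainder and the covariance step: one has to check that the $\dE[|A_i|W_i^2]$ terms, the bias from replacing $\sum_i \dE[A_i^2]$ by $1$, and the $\alpha$-mixing covariance correction can all be controlled by a single choice of $m$ without blowing up the constants. A secondary subtlety is re-indexing: the variables $A_i$ are indexed by simplices, whereas $\alpha$-mixing is a time-indexed notion; so one first has to invoke the Markov/communicating-class structure of the GSC evolution (as set up before Theorem \ref{thm:consistency_g}) to translate temporal $\alpha$-mixing into a usable mixing statement in the index $i$. Once these translations are in place, the Stein-plus-Taylor computation above produces exactly the two-term Berry--Esseen-style bound.
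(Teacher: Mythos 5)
Your plan is in the right spirit — both you and the paper run Stein's method by isolating a dependency neighborhood around each index and controlling the far part via the $\alpha$-mixing covariance inequality $|\Cov(X,Y)| \leq 4C_1C_2\alpha(r)$ — but the single-cutoff version you sketch does not actually produce the stated bound, and two of your intermediate claims do not hold as written.

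First, the paper's proof (following Sunklodas) does not fix a single cutoff $m$. It decomposes across \emph{all} scales: it introduces the shell sums $x(\sigma_i,r)$ of variables at distance exactly $r$, the far tails $T_i^{(m)} = Z_n - \sum_{r=0}^{m} x(\sigma_i,r)$, and the telescoping increments $\delta_i^{(q)} = f'(T_i^{(q-1)}) - f'(T_i^{(q)})$. Feeding these into Stein's identity gives the seven-term decomposition $E_1 + \cdots + E_7$ of $\dE f'(Z_n) - \dE Z_n f(Z_n)$. It is precisely the double sum over $r\geq 1$ and $q$ in $E_3$, $E_4$, $E_5$ that manufactures the weight $\sum_{r=1}^{n-1} r\alpha(r)$: for each $r$ there are $O(r)$ values of $q$ contributing a factor $\alpha(r)$ each. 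Your single neighborhood $W_i = \sum_{|j-i|\leq m} A_j$ with one far term $Z_n^{(i)}$ gives a covariance contribution of order $n\alpha(m)$ for a \emph{fixed} $m$ — a fundamentally different quantity, with no mechanism to generate the $r$-weighted sum, and $m$ would then have to be optimized away rather than appearing as a summation variable.

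Second, your treatment of the Taylor remainder is wrong. You claim that $\dE[|A_i|W_i^2]$ "collapses to third moments of $A_i$" after replacing summands by their a.s.\ bound. It does not: $W_i$ has $O(m)$ summands, so $W_i^2$ has $O(m^2)$ cross terms, each of order $(L/B_n)^2$, and the sum over $i$ gives $O(nm^2)(L/B_n)^3$, not $\sum_i \dE|A_i|^3$. In the paper, the $\sum_i \dE|A_i|^3$ term comes cleanly from the local piece $E_2$ (which involves only $A_i$ and $T_i^{(0)}$, not the whole neighborhood $W_i$); all the neighborhood spread is pushed into the $E_1,E_3,\ldots,E_6$ terms and absorbed into the $S_m$-dependent constant and the $\sum_r r\alpha(r)$ factor.

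Finally, your reading of the prefactor $T$ as arising from choosing the cutoff $m = m(T,n)$ is a misdirection. In the full statement (Theorem \ref{thm:bound}) the prefactor is $n$, not $T$, and the sum is $\sum_{r=1}^{n-1} r\alpha(r)$; the $T$, and the dummy index $i$ with summand $n\alpha(n)$, appearing in this "simple version" are typographical slips from the theorem it paraphrases, not quantities you need to (or can) produce by tuning $m$. If you want to recover the paper's argument, replace the single cutoff by the full multi-scale telescoping of Proposition \ref{prop:sun}, bound $E_2$ and $E_7$ by $D\sum_i\dE|A_i|^3$, and bound each of $E_1,E_3,E_4,E_5,E_6$ using the covariance inequality applied shell by shell, which is where $\sum_{r} r\alpha(r)$ comes from.
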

This result extends \cite{sunklodas2007} in the sense of Markov chains on GSCs that satisfy the $\alpha$-mixing condition. 
This extension makes it possible to study the \emph{contributing effect} of neighboring $d$-simplices (represented as a set of weakly dependent r.v.'s) on a central $d$-simplex. 

\begin{table*}[ht!]
	\caption{AUC scores and runtimes for baselines versus our method's estimator for $d = 1$ and $d=2$.}
	\label{tab:auc_scores}
	\centering
	\footnotesize
	\setlength{\tabcolsep}{2.5pt}
	\begin{tabular}{llllllllll}
		\toprule
		&Enron&Contact&NDC&EU& \hspace{0.1em}&Enron&Contact&NDC&EU\\
		\cmidrule(r){2-5}\cmidrule(r){7-10} 
		& \multicolumn{4}{c}{\textbf{ ($d=1$) AUC / runtime (sec)}} &\hspace{0.1em} & \multicolumn{4}{c}{\textbf{ ($d=2$)  AUC / runtime (sec)}} \\
		\cmidrule(r){2-5}\cmidrule(r){7-10}
		AA 		& 0.54 / 0.18  & 0.57 / 0.18 & 0.30 / 0.38 & 0.61 / 0.15 
		& \hspace{0.1em} & 0.31 / 0.16  & 0.33 / 0.20 & 0.47 / 0.25 & 0.25 / 0.28 \\
		JC & 0.42 / 0.20  & 0.63 / 0.2 & 0.16 / 0.37 & 0.65 / 0.20 
		& \hspace{0.1em} & 0.41 / 0.16  & 0.44 / 0.21 & 0.23 / 0.24 & 0.32 / 0.27 \\
		PA & 0.55 / 0.15  & 0.60 / 0.28  & 0.55 / 0.11 & 0.38 / 0.09 
		& \hspace{0.1em} & 0.52 / 0.15 & 0.63 / 0.20 & \underline{0.74} / 0.24 & 0.34 / 0.18  \\ 
		
		NV & 0.45 / 74 & 0.25 / 155 & 0.30 / 406 & \underline{0.67} / 280 
		& \hspace{0.1em} & 0.49 / 71 & 0.45 / 155 & 0.49 / 3989 & 0.40 / 374 \\	
		
		SL & 0.54 / 152 & \textbf{0.91} / 241 & 0.33 / 260 & 0.57 / 431
		& \hspace{0.1em} & 0.48 / 152 & 0.54 / 241 & 0.40 / 260 & 0.29 / 431 \\
		
		TT & 0.62 / 180 & 0.80 / 202 & 0.42 / 285 & 0.61 / 420 
		& \hspace{0.1em} & \underline{0.70} / 180 & \underline{0.78} / 205 & 0.45 / 285 & \underline{0.67} / 422 \\
		
		TN & \underline{0.67} / 212 & 0.84 / 241 & 0.48 / 280 & 0.55 / 512 
		& \hspace{0.1em} & 0.62 / 212 & 0.70 / 241 & 0.46 / 281 & 0.64 / 512 \\
		
		HP & 0.26 / 22 & 0.64 / 142 & \underline{0.57} / 54 & 0.45 / 17 
		& \hspace{0.1em} & 0.26 / 22 & 0.76 / 144 & 0.45 / 58 & 0.41 / 18 \\	
		
		\midrule
		Ours		& \textbf{0.88} / 1.08 & \underline{0.87} / 7.56 & \textbf{0.78} / 5.84 & \textbf{0.83} / 1.48 
		& \hspace{0.1em} & \textbf{0.94} / 2.25 & \textbf{0.83} / 2.76 & \textbf{0.96} / 0.76 & \textbf{0.80} / 0.642 \\
		& ($\beta$=1) & ($\beta$=0.1) & ($\beta$=0.1) & ($\beta$=10) 
		& \hspace{0.1em} & ($\beta$=0.01) & ($\beta$=0.01) & ($\beta$=0.01) & ($\beta$=10) \\ 
		\bottomrule
	\end{tabular}
\end{table*}

We provide the asymptotic normality of our estimator.
It shows that our estimator converges to a normal distribution in terms of the Wasserstein distance, which leads to weak convergence. 
To achieve the result, we utilize the decomposed terms $\mathcal{V}_T$ and $ \mathcal{B}_T$ from \eqref{eq:diff}. 
Then, we regard $\mathcal{V}_T$ as a sum of dependent random variables and apply the result developed in Proposition \ref{thm:bound_simple}.
Let $\sigma_c^2$ be a limit of variance of the numerator in $T^{-1/2}\mathcal{V}_T$ as $T \to \infty$.
We recall that $S_C$ denotes the event $S_t \in C$, where $S_t$ is the state of the Markov chain at time $t$. 
\begin{theorem}[Asymptotic Normality] \label{thm:asymp_norm}
	Suppose that Assumption \ref{ass} holds, the GSC filtration process is $\alpha$-mixing, and $\sigma_c > 0$.
	If $\beta = o(T^{-1/2})$ and $b = o(T^{-1/2})$, then, for any $F$ and conditioned on $S_C$, the following holds:
	$\sqrt{T} (\tilde{g}_T(F) - g(F) ) \overset{d}{\to} \mN(0, \sigma_c^2 / R(C)^2), \mbox{~as~}T \to \infty.$
\end{theorem}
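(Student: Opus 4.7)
The plan is to leverage the bias-variance decomposition $\tilde{g}_T - g = \mathcal{V}_T + \mathcal{B}_T$ from Equation (8), show that $\sqrt{T}\mathcal{B}_T = o_p(1)$ while $\sqrt{T}\mathcal{V}_T \overset{d}{\to} \mathcal{N}(0, \sigma_c^2 / R(C)^2)$, and combine the two via Slutsky's lemma. The bias contribution is handled via Assumption~\ref{ass}: the kernel localizes the comparison to $F'$ with $\|F-F'\|_1 \leq \delta = O(\beta)$, so $B_T(F,C) = \dE[\hat h_T \mid S_C]/\dE[\hat d_T \mid S_C] - g(F)$ is bounded by $O(\kappa(-\delta/b))$, which, because $\kappa$ lies in the Schwartz space and $b = o(T^{-1/2})$, decays faster than any polynomial in $T$. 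Together with $\beta = o(T^{-1/2})$ and the ergodic limit $\hat{d}_T \overset{p}{\to} R(C) > 0$ conditional on $S_C$ (already exploited in the proof of Theorem~\ref{thm:consistency_g}), this gives $\sqrt{T}\mathcal{B}_T \overset{p}{\to} 0$.

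For the variance term, I would rewrite the numerator $\hat h_T - g\hat d_T - \dE[\hat h_T - g\hat d_T \mid S_C]$ as the partial sum $(T-p)^{-1}\sum_{t=p}^{T-1} X_t$ of the zero-mean random variables
\begin{equation*}
X_t = \sum_{j=1}^{|\mG_t(d)|} \frac{|P^\tau_{t+1}(\sigma_j^{(d)}, F)| - g\,|P_{t+1}(\sigma_j^{(d)}, F)|}{|\mG_t(d)|} - \dE\!\left[\,\cdot \,\Big|\, S_C\right].
\end{equation*}
Since each $X_t$ is a bounded, measurable function of $\mG_{t,p}$, the $\alpha$-mixing of the GSC filtration transfers to $\{X_t\}$ with at most a lag shift of $p$. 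Setting $B_n^2 := \dE\!\left[\left(\sum_t X_t\right)^2 \,\big|\, S_C \right] \sim T \sigma_c^2$, $A_t := X_t / B_n$, and $Z_n := \sum_t A_t$, the uniform bound $|X_t| \leq L$ (from the finite $k$-ball neighborhood) gives $\sum_t \dE|A_t|^3 \leq T L^3 / B_n^3 = O(T^{-1/2})$. Proposition~\ref{thm:bound_simple} then yields
\begin{equation*}
d_w(Z_n, \mathcal{N}(0,1)) \leq C\left(\frac{T L^3}{B_n^3} + \frac{T L^3}{B_n^3}\sum_{n} n\,\alpha(n)\right) = O\!\left(T^{-1/2}(1 + \textstyle\sum_{n} n\,\alpha(n))\right),
\end{equation*}
which vanishes under any standard summability of $n\,\alpha(n)$. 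Wasserstein convergence implies weak convergence, so $T^{-1/2}\sum_t X_t \overset{d}{\to} \mathcal{N}(0, \sigma_c^2)$ conditionally on $S_C$.

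Finally, dividing by $\hat d_T \overset{p}{\to} R(C)$ and invoking Slutsky's theorem gives $\sqrt{T}\mathcal{V}_T \overset{d}{\to} \mathcal{N}(0, \sigma_c^2 / R(C)^2)$; adding the vanishing bias term preserves the limit, yielding the claim. The main obstacle is the variance step: I expect the delicate part to be (i) transferring $\alpha$-mixing from the raw GSC filtration to the per-simplex aggregates $X_t$ without inflating the mixing rate, which relies crucially on the $k$-ball truncation bounding each $X_t$'s dependence radius, and (ii) handling the conditioning on the absorbing event $S_C$ throughout, so that the Markov chain is effectively stationary on $C$ when applying the Stein-type Wasserstein bound. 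Secondary technicalities include checking that $B_n^2 / T \to \sigma_c^2 > 0$ (non-degeneracy of the long-run variance) and verifying that $|\mG_t(d)|$-normalization keeps the summands uniformly bounded despite the incrementally growing graph.
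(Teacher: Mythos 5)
Your proposal follows essentially the same route as the paper's proof: bias-variance decomposition of $\tilde{g}_T - g$, control of $\sqrt{T}\mathcal{B}_T$ via Assumption~\ref{ass} and the bandwidth conditions, a Stein/Wasserstein normal approximation for the numerator of $\mathcal{V}_T$ after transferring $\alpha$-mixing from the GSC filtration to the time-indexed aggregates, and Slutsky's lemma with $\hat{d}_T \overset{p}{\to} R(C)$ to pass the limit to the ratio. The only slight deviation is in how the negligible terms are organized: the paper first removes the kernel-smoothing error $O(\sqrt{T}\beta)$ via the proxy estimator $\hat{g}_T$ (Lemma~\ref{lem:bound_ghat_gtilde}) and then separately bounds the proxy's bias $B_T(F,C) = O(b)$ (Lemma~\ref{lem:bias}), whereas you fold both into a single bias discussion; this is a cosmetic difference and both yield the same conclusion under $\beta = o(T^{-1/2})$ and $b = o(T^{-1/2})$.
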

By using this property, we can make detailed inferences based on the distribution of the estimation error. 
For example, it is possible to create confidence intervals for predictions and perform statistical tests to rigorously test hypotheses about simplex arrivals.

\section{Real-World Data Experiments}

We empirically evaluate the performance of our proposed estimator on real-world dynamic graphs compared to baselines. 
The basic premise in our experiments is 
to capture local and higher-order properties surrounding a $d$-simplex up to time $t$ to predict the appearance of a new $(d+1)$-simplex at time $t'>t$, which contains $\sigma^{(d)}$ as its \emph{face}. Note that we compare our method to other \emph{closely related} methods that were designed to solve different structure prediction tasks. 

\textbf{Datasets}:
We report results on real-world dynamic graph datasets sourced from~\cite{Benson2018}.
Each dataset contains $n$ nodes, $m$ formed edges, and $x$ \emph{timestamped simplices} (represented as a set of nodes). 
There are four datasets named \textit{Enron} ($n =143$, $m=1.8$K, $x = 5$K), \textit{EU} ($n =998$, $m=29.3$K, $x = 8$K), \textit{Contact} ($n =327$, $m=5.8$K, $x = 10$K), and \textit{NDC (National Drug Code)} ($n =1.1$K, $m=6.2$K, $x = 12$K).

\textbf{Experimental setup}:
We first ordered by arrival times and grouped the timestamped simplices into $T$ time slices. 
For most of our experiments, $T$ was set to $20$, except for $d=2$, where $T$ was set to $6$ and $12$ for \emph{EU} and \emph{NDC}, respectively. 
Then, we randomly sampled a set of $d$-simplices from the time slices in the range $[1,T-1]$.
Those $d$-simplices paired with a vertex that successfully formed a face in a $(d+1)$-simplex in the $T$-th time slice were classified as \emph{positive samples}, while the rest were deemed as \emph{negative samples}. 
We picked an equal number of positive and negative samples for evaluation.
For $K$-fold cross-validation for $\beta$, we swapped the $T$-th time slice with one of the $K$  slices preceding the $T$-th time slice for each fold. $K$ was set to $3$. All experiments where repeated $10$ times and average AUC scores and runtimes are reported.

\textbf{Compared methods}:
As naive baselines, we averaged the results of single-edge prediction methods, where a new edge would form between each node in the $d$-simplex and the vertex to be paired with.
Specifically, we compare our estimator with: (i) \emph{heuristic} (\emph{Adamic-Adar} (AA)~\citep{Adamic2001}, \emph{Jaccard Coefficient} (JC)~\citep{Jaccard1986}, and \emph{Preferential attachment} (PA)~\citep{Preferential2004},
(ii) \emph{deep-learning} based (\emph{Node2vec} (NV)~\citep{Grover16}, and \emph{SEAL} (SL)~\citep{zhang2018}), 
and (iii) \emph{temporal graph network} based (\emph{TGAT} (TT)~\cite{tgat_Xu2020_iclr} and \emph{TGN} (TN)~\cite{tgn_icml_grl2020}) link prediction methods.
We note that~\citep{Benson2018} for predicting a ``simplicial closure" has the closest motivation to our method, yet has divergent objectives, therefore we omit comparison to their work. For \emph{hyper-edge prediction} (HP), we picked the recent most representative work by~\cite{Yoon2020} to compare against, although this work only works for static non-evolving hypergraphs. 

\subsection{Results and Discussion}
We averaged the classification accuracy and runtimes of our estimator and the baselines.
We performed two sets of experiments on the arrival of a $(d+1)$-simplex and summarize it in Table~\ref{tab:auc_scores} for $d=\{1,2\}$. 
We also report the bandwidth $\beta$ for our estimator selected by cross-validation.


\textbf{Predicting $2$-simplex ($d=1$)}: 
We observe that our method is nearly two orders of magnitude faster than the deep learning based methods (NV and SL) and nearly an order of magnitude faster than the hypergraph prediction method (HP). While the single edge heuristic methods are relatively faster, their AUC scores are not comparable to our method's AUC scores.
Also, we achieve nearly $30 \%$ improvement (in Enron) over the next best performing prediction method. 


\pgfplotsset{every axis legend/.append style={
		at={(1,0)}, anchor=south east}} 

\begin{figure}[tbp]
	\centering
	\begin{tikzpicture}[scale=0.8]
		\begin{axis}[
			xlabel=Predicted Simplex Dimension,
			ylabel=AUC,
			xtick=data,
			width=9cm,
			height=6cm,
			ymin=0.6]
			
			\addplot[color=red,mark=x] coordinates {
				(1,0.88)
				(2,0.94)
				(3,0.95)
				(4,0.955)
				(5,0.96)
				(6,0.96)
				(7,0.965)
				(8,0.97)
			};
			\addplot[color=blue,mark=*] coordinates {
				(1,0.87)
				(2,0.83)
				(3,0.85)
				(4,0.852)
				(5,0.855)
				(6,0.87)
				(7,0.88)
				(8,0.9)
			};
			\addplot[color=black,mark=diamond] coordinates {
				(1,0.78)
				(2,0.96)
				(3,0.97)
				(4,0.97)
				(5,0.971)
				(6,0.975)
				(7,0.975)
				(8,0.978)
			};
			\addplot[color=green,mark=+] coordinates {
				(1,0.83)
				(2,0.8)
				(3,0.84)
				(4,0.85)
				(5,0.871)
				(6,0.89)
				(7,0.89)
				(8,0.92)
			};
			\legend{Enron, Contact, NDC, EU}
		\end{axis}
	\end{tikzpicture}
	\vskip -0.2in
	\caption{AUC score predicted by our estimator for future formation of a $d$-simplex, given a $d-1$-simplex.}
	\label{fig:higherdim}
\end{figure}
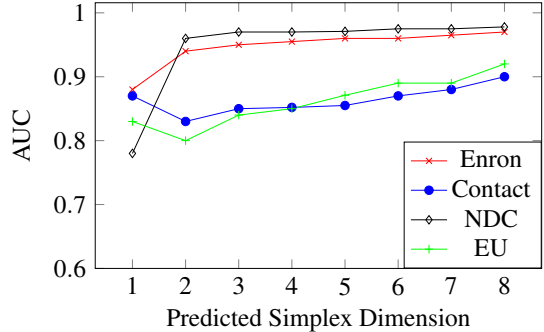

\textbf{Predicting $3$-simplex ($d=2$)}: 
The gap in AUC scores between our method and the baselines are far more pronounced. Our runtimes also improve due to the far fewer number of simplices with dimensions exceeding $3$. 
As observed in~\cite{Yoon2020} about slight drops in accuracy for higher-dimensional hyper-edges, we also note that in HP, the AUC score remains the same or drops slightly compared to prediction at $d=1$.

\textbf{Advantage of higher dimensional simplices}: 
We perform additional experiments by increasing $d$ from $1$ to $8$ and show that handling high-dimensional simplices exhibits high prediction accuracy.
In Figure~\ref{fig:higherdim}, the prediction is basically improves as $d$ increases. 

\textbf{Empirical summary}: 
Traditional estimators fail to accurately capture the rich latent information present in higher-order structures (and their sub-structures) that evolve over time. Our estimator succinctly captures this information via the $f$-vector and weighted scoring of $(\sigma^{(d)},v)$ pair formation depending on the dimension of the simplex in which the pair co-occur in the past.

\section{Conclusion}
We modeled the higher-order interaction as a \textit{simplex} and demonstrated a novel \textit{kernel estimator} to solve the higher-order structure prediction problem. 
From a theoretical standpoint, we proved the consistency and asymptotic normality of our estimator.
We empirically argue that our estimator outperforms hypergraph based and higher-order link prediction baselines from both heuristic and deep-learning based pairwise link prediction methods.

\appendix


\section{Further Description on Example of Applications}

\subsection{Plasma Physics}

The \emph{particle-in-cell} (PIC) method~\cite{evans_1957} is a numerical simulation to model and study the evolution of the kinetic and collective phenomena at play in intense laser-plasma physics.
It traces the trajectory of particles employing $n$-body methods, while solving Maxwell's equations on a Eulerian grid. 
The behavior of plasma is largely determined by the interaction (i.e., both \emph{dynamic collisions} and \emph{static contact}) between its constituent particles, generally termed as \emph{particle-particle} (PP) interactions~\cite{Shouci2005}. PP interactions are best modeled by weighted dynamic graphs, whose edge weights represent interaction forces (weak and strong) between particles. 

Currently, binary interactions between macro-particles (or super-particles) are studied under collision schemes that model impact ionization\footnote{Process that generates novel \emph{macro-electrons} and \emph{macro-ions}}~\cite{pukhov2015}. 
Group-wise $n$-ary interactions are studied under several \emph{interaction cross-sections}~\cite{Martinez2019}, where each order-$n$ cross section focuses on a $n$-ary interaction, for a fixed $n$ (e.g., $3$-cross section focuses on ternary interactions only). 
Prediction of future interactions in evolving plasma has garnered a lot of attention in the field of \emph{computational plasma physics}~\cite{Shalaby_2017}. The outcome of each near-future prediction of higher-order $n$-ary interactions can potentially relate to the detection of novel particles (via novel energy readings on detectors). Additionally, it has an added advantage of being able to optimally place fewer detectors, which also results in cost-savings.

\subsection{Structural biology and biochemistry}

Proteins are composed of amino acids linked by covalent peptide bonds. \emph{Protein structure networks} (PSNs)~\cite{bfgp_els_2012} model protein structure and its mutations as an evolving graph, where an amino acid is considered as a node and an interaction between a pair 
of amino acids is a link/edge in the PSN. It is well-known that the protein structure is directly correlated to the function of the protein~\cite{ranjan_2014}.
Key to the study of protein structure is gaining an understanding of its structural stability and dynamics. The higher-order interplay 
between select \emph{amino acid groups} within PSNs, modeled as cliques, are referred to as ``protein sectors"~\cite{halabi_2009}.
Predicting the formation of such sectors in PSNs gives deeper insights into the robustness of protein structures to mutations. 
For example, in \emph{protein therapeutics}, understanding the protein stability allows them to decide the shelf-life of a protein drug and their rates of decay (half-lives) in a patient's bloodstream.

\section{Example of Simplex and Related Notions}

\begin{example}
	We begin by computing the $k$-balls centered at $1$-simplex $[9,10]$ in $\mG_t$ and $\mG_{t-1}$, respectively.
	
	The $k$-ball at time $t$ for $k=1$ (i.e., 1-hop vertices only) centered at $[9,10]$ is:
	
	$B_{t,1}([9,10] ) = B_{t,1}([9]) \cup B_{t,1}([10]) $. This is the union of $k$-balls at underlying vertices 
	$9$ and $10$ according to Definition~\ref{def:kball_simplex}.
	\begin{align}
		B_{t,1}([9,10] ) &= B_{t,1}([9]) \cup B_{t,1}([10]) \nonumber\\
		&= \{ 9,13,8,5,6,10 \} \cup \{ 10,14,13,9,6,7,11,15 \} \nonumber \\
		&= \{ 9,13,8,5,6,10,14,7,11,15 \} \nonumber
	\end{align}
	Similarly, The $k$-ball at previous time step $t-1$ for $k=1$ (i.e., 1-hop vertices only) centered at $[9,10]$ is:
	\begin{align}
		B_{t-1,1}([9,10] )	 &= B_{t-1,1}([9]) \cup B_{t-1,1}([10]) \nonumber\\
		&= \{ 9,13,8,5,6,10 \} \cup \{ 10,14,13,9,6,11,15 \} \nonumber \\
		&= \{ 9,13,8,5,6,10,14,11,15 \} \nonumber
	\end{align}
	Notice that there is only a difference of vertex $7$ missing from set $B_{t-1,1}([9,10] )$ as compared to set $B_{t,1}([9,10] )$ at time $t$.

	Now, we calculate the subcomplex spanned by $B_{t,1}([9,10])$
	Then\footnote{all simplices are placed on a line each in increasing order of their dimension},
	\begin{align}
		\mG'_t([9,10]) =& \{ [9],[13],[8],[5],[6],[10],[14],[11],[15],[7],  \\ \nonumber
		& [9,10], [9,6],[9,5],[9,8],[9,13], [10,6],[10,7],[10,11],[10,15],[10,14],[10,13], \\ \nonumber
		& [9,6,10], [10,14,15] \}
	\end{align}
	
	Now, we compute the compressed $f$-vector notation of $\mG'_t([9,10]) $ to get 
	\[
	f(\mG'_t([9,10])) = (1,10,11,2)
	\]
	Finally, the neighborhood $N_t([9,10]) = (1,10,11,2)$.
\end{example}

\section{Further details of the experiment}

\subsection{Datasets}
We report results on real-world dynamic graph datasets sourced from Benson et. al.~\cite{Benson2018}.
Each dataset is a set of \emph{timestamped simplices} (represented as a set of nodes). 
In each dataset, let $n$, $m$, and $x$ denote the number of nodes, edges formed, and timestamped simplices, respectively.
\emph{Enron} ($n =143$, $m=1.8$K, $x = 5$K) and \emph{EU} ($n =998$, $m=29.3$K, $x = 8$K) model email networks where nodes are email addresses and all recipients of an email form a simplex in the network.
\emph{Contact} ($n =327$, $m=5.8$K, $x = 10$K) is a proximity graph where nodes represent persons and a simplex is a set of persons in close proximity to each other. \emph{NDC} ($n =1.1$K, $m=6.2$K, $x = 12$K) is a drug network from the \emph{National Drug Code} directory, where nodes are class labels and a simplex is formed when a set of class labels appear together on a single drug.

\subsection{Compared methods} \label{sec:compare_methods}
\emph{Adamic-Adar}~\cite{Adamic2001} and the \emph{Jaccard Coefficient}~\cite{Jaccard1986} measure link probability between two nodes based on the closeness of their respective feature vectors. \emph{Preferential attachment}~\cite{Preferential2004} has received considerable attention as a model of growth of networks as they model future link probability as the product of the current number of neighbors of the two nodes. Motivated by resource allocation in transportation networks (much alike the Optimal Transport (OT) problem), \emph{Resource allocation index}~\cite{ResourceAlloc2009} proposes a node $x$ tries to transmit a unit resource to node $y$ via common neighbors that play the role of \emph{transmitters} and similarity is measured by the amount of the resource $y$ received from $x$. Node2vec~\cite{Grover16} and SEAL~\cite{zhang2018} are deep-learning based graph embedding methods that are used in link prediction.

\begin{remark}[Difference between our setting and \textit{simplical closure}]
	The closest work~\cite{Benson2018} proposed predicting a ``simplicial closure" event where at time $t$ there exists a set of nodes which are pairwise edge connected and the task is to predict whether at time $t+1$ there will arrive a simplex which covers all these nodes. This phenomenon was termed as \emph{simplicial closure}. For example, authors $A$, $B$ and $C$ have all co-authored in pairs (i.e., $\{A,B\}, \{ A,C \}$ and $\{ B,C \}$) and a simplicial closure event would take place at $t+1$, if a simplex $\{A,B,C\}$ arrives, implying that all three authors co-author on a single paper. 
	Our prediction task significantly diverges and aims to solve a different problem. Considering our previous example, we are given a single co-authorship relationship between say $A$ and $B$ at time $t$, we predict whether authors $A$ and $B$ will co-author with a third author $C$ (ternary co-authorship relationship) on a single paper at time $t+1$ in the future.
\end{remark}

\section{Proof for Consistency}

For preparation, we rewrite the estimator $\hat{g}_T$.
Plugging in the definition of our kernel (Equation~\ref{eq:kernel}) into the equation of our estimator (Equation~\ref{eq:est}) along with the definitions of $P_t(\cdot,\cdot)$ and $P_t^\tau(\cdot,\cdot)$ to replace the indicator variables with actual counts, we obtain the following simplification of Equation~\ref{eq:est}. 
Then, it is reformulated as
\begin{align}
	\label{eq:est_new}
	\tilde{g}_T \left( F \right) 
	= \frac{ \sum_{t'=T-p}^T \sum_{\sigma^{(d)}_j \in \mG_{t'}{(d)}}
		\left(	
		\lvert P^\tau_{t'+1}( \sigma^{(d)}_j, F)  \rvert +
		\beta \sum_{s \in \Gamma(F,\delta)} \lvert P^\tau_{t'+1}( \sigma^{(d)}_j, s) \rvert
		\right)
	}
	{ \sum_{t'=T-p}^T \sum_{\sigma^{(d)}_j \in \mG_{t'}{(d)}}
		\left(	
		\lvert P_{t'+1}( \sigma^{(d)}_j, F) \rvert +
		\beta \sum_{s \in \Gamma(F,\delta)} \lvert P_{t'+1}( \sigma^{(d)}_j, s) \rvert
		\right)
	} .
\end{align}
When we set $\beta=0$, 
we look for other pairs whose feature corresponds to $F$
and we calculate a fraction of how many such close by pairs actually form a $(d+1)$-simplex at time $t'+1$.
This fraction is summed across various $d$-simplices $\sigma_j^{(d)}$ by varying $j$ and also across various discrete time steps by varying $t'$.
Setting $\beta >0$ allows our estimator to \emph{smooth} over close by features.

It turns out simpler to study a proxy estimator $\hat{g}_T$, which omits the smoothing of the original one.
We will show that it is asymptotically equivalent to $\tilde{g}$.
Let $\lvert \mG_t{(d)} \rvert$ denote the total number of $d$-simplices in GSC $\mG_t$ at time $t$.
Then, for a feature $F$, we define the proxy estimator as 
\begin{align}
	\hat{g}_T(F) := \frac{\hat{h}_T(F) }{ \hat{d}_T(F)},
\end{align}
where the terms are defined as
\begin{align*}
	\hat{h}_T(F) &= \frac{1}{T-p} \sum_{t=p}^{T-1} \sum_{j=1}^{\lvert \mG_t(d) \rvert} 
	\frac{\lvert P^\tau_{t+1} ( \sigma^{(d)}_j, F) \rvert}{\lvert \mG_t(d) \rvert}, \mbox{~and~}
	\hat{d}_T(F) = \frac{1}{T-p} \sum_{t=p}^{T-1} \sum_{j=1}^{\lvert \mG_t(d) \rvert} 
	\frac{\lvert P_{t+1} ( \sigma^{(d)}_j, F)  \rvert}{\lvert \mG_t(d) \rvert}.
\end{align*}
Observe that $ \sum_{i=1}^{\lvert \mG_t(d) \rvert}$ ranges over the total number of $d$-simplices in $\mG_t$\footnote{In practice, the total number of $d$-simplices is much less than the maximum possible cliques with $n$ vertices.} and note that $\lvert \mG_t(d) \rvert$ changes with time step $t$. 
Recall that the terms $P^\tau_{t+1}(\cdot,\cdot)$ and $P_{t+1}(\cdot,\cdot)$ count all \emph{actual} and \emph{possible} formation of $[ \sigma_i^{(d)}, \tilde{v}_m]$ ($(d+1)$-simplex) that have the \emph{same feature vector} 
$F$. 
Lemma \ref{lem:bound_ghat_gtilde} in the supplementary material proves $\lvert \tilde{g}_T(F)  -  \hat{g}_T(F) \rvert \to 0$ as $\beta \to 0$.
First of all, we show the validity of the proxy estimator $\hat{g}_T$.
\begin{lemma}[Approximation by proxy] \label{lem:bound_ghat_gtilde}
	We obtain
	\[
	\lvert \tilde{g}_T(F)  -  \hat{g}_T(F) \rvert = O(\beta), \forall F.
	\]
\end{lemma}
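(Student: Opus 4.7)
The plan is to use the closed-form expression of $\tilde{g}_T(F)$ in Equation~\eqref{eq:est_new}, which is already $\beta$-affine in both the numerator and the denominator. I would write $\tilde{g}_T(F) = (N_0 + \beta N_1)/(D_0 + \beta D_1)$, grouping the ``exact-match'' contributions (from $\mathds{I}\{F = F'\}$ in the kernel) into $N_0, D_0$ and the ``smoothing'' contributions (from the $\beta$-weighted sum over $s \in \Gamma(F,\delta)$) into $N_1, D_1$. Under the normalization convention that makes the two estimators comparable, $N_0/D_0$ coincides with $\hat{g}_T(F) = \hat{h}_T(F)/\hat{d}_T(F)$, so the proxy $\hat{g}_T$ is exactly $\tilde{g}_T$ evaluated at $\beta = 0$.

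A direct algebraic computation then yields the elementary identity
\[
\tilde{g}_T(F) - \hat{g}_T(F) = \frac{\beta(D_0 N_1 - N_0 D_1)}{D_0(D_0 + \beta D_1)}.
\]
I would next exploit two structural facts: (a) every pair counted by $P^\tau_{t'+1}(\sigma_j^{(d)}, \cdot)$ is also counted by $P_{t'+1}(\sigma_j^{(d)}, \cdot)$, so $N_0 \leq D_0$ and $N_1 \leq D_1$, which forces $|D_0 N_1 - N_0 D_1| \leq D_0 D_1$; and (b) $\Gamma(F,\delta)$ is a finite integer ball whose cardinality is bounded by a constant depending only on $\delta$ and the ambient dimension of the face-vector. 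Putting these together gives
\[
|\tilde{g}_T(F) - \hat{g}_T(F)| \;\leq\; \frac{\beta D_1}{D_0 + \beta D_1} \;\leq\; \beta \cdot \frac{D_1}{D_0},
\]
so the lemma reduces to showing that $D_1/D_0$ is $O(1)$.

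The main obstacle I anticipate is the nondegeneracy of the denominator: one must argue that $D_0 > 0$ with probability tending to $1$ and that the ratio $D_1/D_0$ is bounded in probability. The first is the same well-definedness requirement already invoked in Theorem~\ref{thm:consistency_g}. For the second, since $D_1$ is a sum of at most $|\Gamma(F,\delta)|$ terms structurally analogous to $D_0$, the smoothness of $g$ from Assumption~\ref{ass} together with the $\alpha$-mixing hypothesis ensures that neighboring-feature counts grow at rates comparable to $D_0$, giving the uniform $O(1)$ control of $D_1/D_0$ and hence the claimed $O(\beta)$ approximation bound. A careful writeup should flag that the implicit constant in $O(\beta)$ depends on $|\Gamma(F,\delta)|$ and on an almost-sure lower bound for $D_0$ on the event $S_C$ conditioned upon in the consistency theorem.
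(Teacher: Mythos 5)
Your proposal takes essentially the same route as the paper: both exploit the $\beta$-affine form of the estimator in Equation~\eqref{eq:est_new}, note that $\hat{g}_T$ is the $\beta=0$ specialization, and bound the resulting rational perturbation by $O(\beta)$ using finiteness of $\Gamma(F,\delta)$ and nondegeneracy of the denominator. Your explicit identity $\tilde{g}_T - \hat{g}_T = \beta(D_0 N_1 - N_0 D_1)/\bigl(D_0(D_0+\beta D_1)\bigr)$ together with the observation $N_i\leq D_i$ is actually cleaner than the paper's terse argument (which only asserts $C_1,C_2=O(\beta)$ and waves at boundedness of $\hat{h}_T,\hat{d}_T$). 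One small misdirection: you appeal to Assumption~\ref{ass} (smoothness of $g$) and $\alpha$-mixing to control $D_1/D_0$, but those are about the regression function $g$ and dependence, not the \emph{counts} appearing in $D_0,D_1$; the ratio is bounded simply because $D_1$ is a sum of at most $|\Gamma(F,\delta)|$ bounded counting terms while $D_0 \to R(C)>0$ on the event $S_C$ (Lemma~\ref{lemma:denom}) --- which is also the nondegeneracy condition the paper implicitly relies on.
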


\begin{proof}[Proof of Lemma \ref{lem:bound_ghat_gtilde}]
	Recall that $\Gamma(F,\delta)$ denotes the set of features at $L_1$-distance at most $\delta$ from $F$. We denote by $\lvert \Gamma(F,\delta) \rvert$ the cardinality of this set.
	\[
	\tilde{g}_T(F) = \frac{\hat{h}_T(F) + C_1}{ \hat{d}_T(F) + C_2  }
	\]
	where $C_1 = \beta \sum_{j,t} \sum_{s \in \Gamma(F,\delta)} \lvert P^\tau_{t+1}(\sigma^{(d)}_j,s) \rvert$
	and $C_2 = \beta  \sum_{j,t} \sum_{s \in \Gamma(F,\delta)} \lvert P_{t+1}(\sigma^{(d)}_j,s) \rvert$. 
	Due to the finiteness of features in $P^\tau_{t+1}(\cdot,\cdot)$ and $P_{t+1}(\cdot,\cdot)$, 
	we have that $C_1 = C_2 = O(\beta)$. Both $C_1$ and $C_2$ are non-negative integers.
	So,
	\[
	\lvert \tilde{g}_T(F)  -  \hat{g}_T(F) \rvert = 
	\left\lvert 
	\frac{\hat{h}_T(F) + C_1}{ \hat{d}_T(F) + C_2  } - 
	\frac{\hat{h}_T(F)}{\hat{d}_T(F)}
	\right\rvert
	= O(\beta)
	\]	
	In the last step, the second fraction is a positive constant and can thus be ignored from our asymptotic analysis because both $\hat{h}_T$ and $\hat{d}_T$ are bounded.
\end{proof}

Next, we prove the convergence of the proxy estimator $\hat{g}_T(F)$. 
As the first step, we describe the detail of its decomposition in \eqref{eq:diff}.
To simplify the notation, we will drop $F$ from all estimator notations.

\begin{proposition}\label{prop:decomp}
	As written in Equation \ref{eq:diff}, we obtain
	\begin{align*}
		\hat{g}_T - g = \mathcal{V}_T + \mathcal{B}_T.
	\end{align*}
	Furthermore, with the event $S_C$, there exist a stochastic terms  $q_t$ for $t$ such as
	\begin{align*}
		\mathcal{V}_T = \frac{ (T-p)^{-1} \sum_{t=p}^{T-1}q_t}{\hat{d}_T}.
	\end{align*}
\end{proposition}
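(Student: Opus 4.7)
The plan is to verify the decomposition \eqref{eq:diff} by straightforward algebraic manipulation, since the statement is essentially a bias-variance decomposition tailored to the conditional event $S_C$. Starting from the definition of the proxy estimator, I would write
\[
\hat{g}_T - g = \frac{\hat{h}_T}{\hat{d}_T} - g = \frac{\hat{h}_T - g\,\hat{d}_T}{\hat{d}_T},
\]
and then insert $\pm \,\dE[\hat{h}_T - g\,\hat{d}_T \mid S_C]$ in the numerator. After dividing through by $\hat{d}_T$, one summand is exactly $\mathcal{V}_T$ by definition, so the task reduces to identifying the other summand with $\mathcal{B}_T$.

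For this identification, I would invoke the definition of $B_T(F,C)$. By linearity of conditional expectation,
\[
\dE[\hat{h}_T - g\,\hat{d}_T \mid S_C] = \dE[\hat{h}_T \mid S_C] - g\,\dE[\hat{d}_T \mid S_C],
\]
and rearranging $B_T(F,C) = \dE[\hat{h}_T \mid S_C]/\dE[\hat{d}_T \mid S_C] - g$ gives $\dE[\hat{h}_T \mid S_C] = (B_T(F,C) + g)\,\dE[\hat{d}_T \mid S_C]$. Substituting back yields $\dE[\hat{h}_T - g\,\hat{d}_T \mid S_C] = B_T(F,C)\,\dE[\hat{d}_T \mid S_C]$, so dividing by $\hat{d}_T$ produces exactly $\mathcal{B}_T$. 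This completes the first assertion.

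For the second assertion, I would exploit the additive structure of $\hat{h}_T$ and $\hat{d}_T$ across the time index $t$. Writing
\[
\hat{h}_T - g\,\hat{d}_T = (T-p)^{-1}\sum_{t=p}^{T-1} a_t, \quad a_t := \sum_{j=1}^{\lvert \mG_t(d) \rvert} \frac{\lvert P^\tau_{t+1}(\sigma_j^{(d)}, F)\rvert - g\,\lvert P_{t+1}(\sigma_j^{(d)}, F)\rvert}{\lvert \mG_t(d) \rvert},
\]
and setting $q_t := a_t - \dE[a_t \mid S_C]$, linearity of conditional expectation gives $\hat{h}_T - g\,\hat{d}_T - \dE[\hat{h}_T - g\,\hat{d}_T \mid S_C] = (T-p)^{-1}\sum_{t=p}^{T-1} q_t$. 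Dividing by $\hat{d}_T$ recovers the claimed representation of $\mathcal{V}_T$.

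The whole argument is essentially bookkeeping and I do not anticipate any genuine obstacle; the only subtlety worth double-checking is that $\hat{d}_T > 0$ on the relevant event so that the division is well-defined (this is where ``well-defined with probability tending to $1$'' from Theorem \ref{thm:consistency_g} will later come in), and that conditioning on $S_C$ commutes with the finite sums used to define $a_t$ and $q_t$, which it does by linearity.
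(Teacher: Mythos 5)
Your proof is correct and follows essentially the same route as the paper's: the same insertion of $\pm\,\dE[\hat{h}_T - g\hat{d}_T \mid S_C]$ in the numerator, the same use of the definition of $B_T(F,C)$ to identify $\mathcal{B}_T$, and your $a_t$ is exactly the paper's $\hat{h}_T(t) - g\hat{d}_T(t)$, so your $q_t$ coincides with theirs. The remark about $\hat{d}_T > 0$ is a sensible observation the paper defers to the consistency theorem.
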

\begin{proof}[Proof of Proposition \ref{prop:decomp}]
	%
	With the definition of $B_T(F,C)$, we have 
	\begin{align}
		\label{eq:diff_appendix}
		\hat{g}_T - g &= \frac{\hat{h}_T }{ \hat{d}_T} - g \\ \nonumber
		&= \frac{\hat{h}_T - g \hat{d}_T} {\hat{d}_T} \\ \nonumber
		&= \frac{ [\hat{h}_T - g \hat{d}_T] - \dE[ \hat{h}_T - g \hat{d}_T \mid S_C ] 
			+ \dE[ \hat{h}_T - g \hat{d}_T \mid S_C ]}{\hat{d}_T} 	\\ \nonumber
		&= \frac{ [\hat{h}_T - g \hat{d}_T] - \dE[ \hat{h}_T - g \hat{d}_T \mid S_C ]  }{\hat{d}_T} 
		+ \frac{ \dE[ \hat{h}_T \mid S_C  ] - g \dE[ \hat{d}_T \mid S_C ]}{\hat{d}_T} \\ \nonumber
		&= 
		{
			\frac{ [\hat{h}_T - g \hat{d}_T] - \dE[ \hat{h}_T - g \hat{d}_T \mid S_C ]  }{\hat{d}_T}
		}
		+ 
		\frac{B_T(F,C) \dE[\hat{d}_T \mid S_C] }{\hat{d}_T}\\
		&= \mathcal{V}_T + \mathcal{B}_T.
	\end{align}
	
	We are interested in the asymptotic behavior of the Markov chain at time $T \rightarrow \infty$. 
	Let $F$ denote $F_T(\sigma_i^{(d)}, \tilde{v}_m )$. Recall that the terms
	$\lvert P^\tau_{T+1}(\cdot,\cdot) \rvert$ and $ \lvert P_{T+1}(\cdot,\cdot) \rvert$ 
	count all \emph{actual} and \emph{possible} formation of 
	$[ \sigma_i^{(d)}, \tilde{v}_m]$ ($(d+1)$-simplex) that result in the \emph{same feature vector} 
	$F$ at time $T+1$. Our Markov chain has a \emph{finite state space} and hence belongs to a closed communication class with probability approaching to $1$.
	We provide a statistical consistency conditional on $S_C$ for any communication class $C$.	
	
	For a given time step $t$, we define
	\begin{align}
		\label{eq:num_denom_t}
		\hat{h}_T(t) :=  \frac{1}{ \lvert \mG_t(d) \rvert} \sum_{j=1}^{ \lvert \mG_t(d)\rvert} 
		\lvert P^\tau_{t+1}(\sigma_j^{(d)},  F) \rvert \\ \nonumber
		\hat{d}_T(t) :=  \frac{1}{ \lvert \mG_t(d)\rvert} \sum_{j=1}^{ \lvert \mG_t(d)\rvert} 
		\lvert P_{t+1}(\sigma_j^{(d)}, F) \rvert 
	\end{align}
	Note that $ \hat{h}_T = \frac{1}{T-p} \sum_{t=p}^{T-1}	\hat{h}_T(t)$ and 
	$ \hat{d}_T = \frac{1}{T-p} \sum_{t=p}^{T-1}	\hat{d}_T(t)$.
	
	Let us set
	\begin{align}
		\label{eq:qt}
		q_t := [\hat{h}_T(t) - g \hat{d}_T(t)] - \dE[ \hat{h}_T(t) - g \hat{d}_T(t) \mid S_C ] 
	\end{align}
	Note that $q_t$ is the numerator of the \emph{stochastic term} in Equation~\ref{eq:diff}
	and a \emph{bounded deterministic function} of $S_C$ at a given time step $t$.
\end{proof}

For the stochastic term $\hat{d}_T$ which appears in the denominator, we show its convergence.
The following two lemmas provide the result.

\begin{lemma} \label{lem:var_h_d}
	If the GSC process is $\alpha$-mixing, then, as $ T \to \infty$, we obtain
	\begin{align*}
		\Var(\hat{h}_T(F)|S_C) \to 0, \mbox{~and~} \Var(\hat{d}_T(F)|S_C) \to 0,
	\end{align*}
	for any $F$.
\end{lemma}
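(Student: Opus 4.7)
The plan is to treat $\hat{h}_T$ and $\hat{d}_T$ as time averages of the per-step quantities $\hat{h}_T(t)$ and $\hat{d}_T(t)$ defined in \eqref{eq:num_denom_t}, and to control their conditional variances by expanding into a double sum of covariances and exploiting the $\alpha$-mixing decay. Since the argument for $\hat{d}_T$ is identical (it simply replaces $P^\tau_{t+1}$ by $P_{t+1}$), I will focus on $\hat{h}_T$.

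First, I would write
\[
\Var(\hat{h}_T(F)\mid S_C) \;=\; \frac{1}{(T-p)^2}\sum_{t=p}^{T-1}\sum_{s=p}^{T-1} \Cov\bigl(\hat{h}_T(t), \hat{h}_T(s)\bigm| S_C\bigr),
\]
which is the standard variance-of-mean decomposition. The second step is to verify that each $\hat{h}_T(t)$ is uniformly bounded: because $\lvert P^\tau_{t+1}(\sigma_j^{(d)}, F)\rvert$ counts pairs $(\sigma_j^{(d)}, \tilde v)$ with $\tilde v \in B_{t,k}(\sigma_j^{(d)})$, it is dominated by $|B_{t,k}(\sigma_j^{(d)})|$, and after normalization by $|\mG_t(d)|$ one obtains that $\hat{h}_T(t)$ is bounded by some $M < \infty$ (under a mild bounded-$k$-ball/bounded-degree assumption that is implicit in the setup). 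This boundedness is what makes a mixing covariance inequality applicable.

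Third, I would apply a Davydov-type inequality: for bounded random variables $U,V$ that are measurable with respect to $\sigma$-algebras separated by time $r$, $|\Cov(U,V)| \le 4 M^2 \alpha(r)$. Since $\hat{h}_T(t)$ is a functional of the filtration $\mG_{t,p}$, two such quantities at time indices $t$ and $s$ are measurable w.r.t.\ $\sigma$-algebras separated by $|t-s|-p$ time units, and conditioning on $S_C$ preserves the mixing rate up to a constant factor (indeed, within the closed communication class $C$ the chain inherits an $\alpha$-mixing property with at most the same coefficients). Thus
\[
\bigl|\Cov(\hat{h}_T(t),\hat{h}_T(s)\mid S_C)\bigr| \;\le\; C\,\alpha(\max\{|t-s|-p,0\}).
\]
Combining these ingredients gives
\[
\Var(\hat{h}_T(F)\mid S_C) \;\le\; \frac{C}{(T-p)^2}\sum_{t=p}^{T-1}\sum_{s=p}^{T-1} \alpha\bigl(\max\{|t-s|-p,0\}\bigr) \;\le\; \frac{C'}{T-p}\sum_{r=0}^{T-1}\alpha(r),
\]
and since $\alpha(r)\to 0$, the Cesàro-type average $\frac{1}{T}\sum_{r=0}^{T-1}\alpha(r)$ tends to $0$, which yields $\Var(\hat{h}_T(F)\mid S_C)\to 0$. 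The same chain of inequalities applied to $\hat{d}_T(t)$ completes the lemma.

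The main obstacle I anticipate is not the variance bookkeeping but rather the two structural points: (i) justifying the uniform boundedness of $\hat{h}_T(t)$ and $\hat{d}_T(t)$ in a regime where the $k$-ball $B_{t,k}(\sigma_j^{(d)})$ could in principle be large, which will require either an explicit bounded-degree assumption or a uniform moment bound to replace the crude $4M^2$ factor in Davydov's inequality; and (ii) transferring the $\alpha$-mixing coefficients of the unconditional GSC filtration process to the conditional law given $S_C$, which is standard once one observes that $S_C$ is an asymptotic event inside a closed communication class, but needs a short argument.
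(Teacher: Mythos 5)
Your proposal is correct in its essentials but takes a genuinely different route from the paper. The paper's proof follows Sarkar et al.\ (2014): it forms $U_T := \sum_t q_t/\sqrt{T}$ with $q_t = [\hat{h}_T(t) - g\hat{d}_T(t)] - \dE[\hat{h}_T(t) - g\hat{d}_T(t)\mid S_C]$, splits the sum over three time intervals $[1,T_C-1]$, $[T_C,T_C+M-1]$, $[T_C+M,T]$ determined by the hitting time $T_C$ of the communication class $C$, invokes Lemmas 5.7 and 5.8 of Sarkar et al.\ for the two pieces of the law of total variance, and concludes $\Var(U_T\mid S_C)\to\sigma_c$ — from which the lemma is said to ``follow trivially'' by reading off the extra factor of $1/T$. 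Your approach instead expands $\Var(\hat{h}_T\mid S_C)$ directly as a double sum of covariances, bounds each covariance by a Davydov-type inequality $|\Ep[UV]-\Ep U\,\Ep V|\le 4M^2\alpha(|t-s|-p)$, and closes with Ces\`aro averaging of $\alpha(\cdot)$. This is more elementary and self-contained, and it has the virtue of addressing the variances of $\hat{h}_T$ and $\hat{d}_T$ individually rather than only the specific linear combination $q_t$; the paper's ``it follows trivially'' step actually requires rerunning the same machinery with $q_t$ replaced by the centered $\hat{h}_T(t)$ and $\hat{d}_T(t)$ separately, which your route makes explicit. What the paper's route buys, however, is the quantity $\sigma_c^2$ (the limit of $\Var(U_T\mid S_C)$), which is reused as the asymptotic variance in Theorem~\ref{thm:asymp_norm}; your purely covariance-summation argument yields $\Var(\hat{h}_T\mid S_C)\to 0$ but would need further work to identify the limiting variance for the CLT. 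Finally, you are right to flag the two caveats: (i) uniform boundedness of $\hat{h}_T(t)$, $\hat{d}_T(t)$ (the paper implicitly assumes this via the finiteness of the state space and the boundedness of $q_t$), and (ii) transferring $\alpha$-mixing to the law conditioned on $S_C$ — the paper resolves (ii) by the $T_C$-based three-interval decomposition and Remark 5.10 of Sarkar et al.\ rather than by arguing conditional mixing directly, so strictly speaking a Davydov inequality applied to the conditional covariance does need a short bridging argument of the type you anticipate.
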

\begin{proof}[Proof of Lemma \ref{lem:var_h_d}]
	We show that variance divided by $T$ converges to a non-negative constant. 
	Let $U_T := \sum_t q_t / \sqrt{T}$, where $q_t$ (as shown in Equation~\ref{eq:qt}) is a bounded deterministic function of the state of $X_t$ at time $t$. 
	As demonstrated in Sarkar et. al.~\cite{sarkar2014}, we too break our weighted sum $U_T$ across three time intervals: 
	(i) $[1,T_C-1]$, (ii) $[T_C,T_C+M-1]$, and (iii) $[T_C+M,T]$, where $M$ is a constant.
	Now, from~\cite{sarkar2014}, we simply apply Lemma $5.7$ to get that 
	$\dE[ \Var ( U_T \mid \mE(T_C), S_C ) \mid S_C ] \rightarrow \sigma_c$, for some $\sigma_c \geq 0$
	and from Lemma $5.8$, we have that $\Var(\dE[ U_T \mid \mE(T_C), S_C ] \mid S_C) = o(1)$.
	
	Now, since  \textit{the law of total variance} provides
	\[
	\Var(U_T \mid S_C) = \dE[ \Var ( U_T \mid \mE(T_C), S_C ) \mid S_C ] + 
	\Var(\dE[ U_T \mid \mE(T_C), S_C ] \mid S_C) 
	\]
	we use the previous results from Lemmas $5.7$ and $5.8$ in~\cite{sarkar2014} to get 
	\[
	\Var(U_T \mid S_C) \rightarrow \sigma_c \text{ as } T \rightarrow 0 \text{, for some constant }
	\sigma_c \geq 0 
	\]
	Plugging in the definition of $q_t$ into $U_T$ and calculating $\Var(U_T \mid S_C)$, it follows trivially that $\Var( \hat{h}_T \mid S_C ) \rightarrow 0$ and 
	$\Var( \hat{d}_T \mid S_C ) \rightarrow 0$ as $T \rightarrow \infty$. We refer readers to Remark $5.10$ in 
	\cite{sarkar2014} to see how these results also hold in the case when $C$ is \emph{aperiodic}.
\end{proof}

\begin{lemma}
	\label{lemma:denom}
	If the GSC process is $\alpha$-mixing, then there exist a function $R(C)$ with a deterministic function of class $C$ denote, such as 
	\[
	\lim_{T \rightarrow \infty} \dE [  \hat{d}_T(F) \mid \mE(T_C), S_C   ]  = R(C),
	\text{ and } 
	\lim_{T \rightarrow \infty} \dE [  \hat{d}_T(F) \mid S_C   ]  = R(C).
	\]
\end{lemma}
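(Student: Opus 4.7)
The plan is to view $\hat{d}_T(F)$ as a time average of a bounded function of the Markov chain state, and then invoke ergodicity inside the closed communication class $C$. First I would set $\phi(s) := \dE\!\left[ |\mG_t(d)|^{-1} \sum_{j} |P_{t+1}(\sigma_j^{(d)}, F)| \,\Big|\, X_t = s \right]$ for each state $s$. Because the state space is finite and the summand is bounded above by the (finite) number of candidate $(\sigma^{(d)},\tilde{v})$ pairs, $\phi$ is a bounded deterministic function on $\mathcal{S}$.

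Next, conditioning on $\mE(T_C) \cap S_C$ forces the chain to enter class $C$ at the finite time $T_C$ and remain there forever after. Since $C$ is a finite closed irreducible (and, under $\alpha$-mixing, aperiodic up to the usual treatment as in Remark 5.10 of \cite{sarkar2014}) communication class, it admits a unique stationary distribution $\pi_C$. I would define
\begin{align*}
    R(C) := \sum_{s \in C} \pi_C(s)\, \phi(s),
\end{align*}
and split the sum defining $\hat{d}_T(F)$ into the pre-entry part $t < T_C$ and the post-entry part $T_C \leq t \leq T-1$. The pre-entry part contributes at most $(T_C/(T-p))\cdot \|\phi\|_\infty = o(1)$ as $T \to \infty$, since $T_C$ is finite almost surely on $S_C$ and $\phi$ is bounded. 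For the post-entry part, $\alpha$-mixing of the GSC process (and hence of the chain restricted to $C$) implies an $L^1$ ergodic theorem, so the time average converges to $\sum_{s\in C}\pi_C(s)\phi(s) = R(C)$ in probability. Combining with the uniform boundedness of the summands and applying the dominated convergence theorem inside the conditional expectation gives $\dE[\hat{d}_T(F) \mid \mE(T_C), S_C] \to R(C)$.

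For the second statement, I would use the tower property to write
\begin{align*}
    \dE[\hat{d}_T(F) \mid S_C] \;=\; \dE\!\left[ \dE[\hat{d}_T(F) \mid \mE(T_C), S_C] \,\Big|\, S_C \right].
\end{align*}
The inner conditional expectation is uniformly bounded by $\|\phi\|_\infty$ and, by the first part, converges pointwise (in the outcome that determines $T_C$) to the constant $R(C)$ as $T \to \infty$. Dominated convergence on the outer expectation then yields $\dE[\hat{d}_T(F) \mid S_C] \to R(C)$.

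The main obstacle I anticipate is verifying that the $\alpha$-mixing assumption, which is stated for the whole filtration process $\{X_t\}$, transfers to the post-entry process on $C$ in the form actually needed by the ergodic theorem — in particular, justifying the convergence of the time average $(T-T_C)^{-1}\sum_{t \geq T_C} \phi(X_t) \to \sum_{s\in C}\pi_C(s)\phi(s)$ when the starting distribution at time $T_C$ is not $\pi_C$. This is handled exactly as in the argument underlying Lemma \ref{lem:var_h_d}: $\alpha$-mixing together with finite state space gives geometric forgetting of the initial distribution, so coupling or a standard mixing ergodic theorem (cf.\ \cite{sarkar2014}, Lemmas 5.7--5.8 and Remark 5.10) closes the gap and produces the common deterministic limit $R(C)$.
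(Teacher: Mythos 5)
Your proposal is correct and follows essentially the same route as the paper: write $\hat{d}_T$ as a time average of a bounded, state-determined quantity, invoke ergodicity/stationarity of the chain once it enters the closed class $C$ to obtain a deterministic limit $R(C)$, and derive the second statement from the first via the tower property and dominated convergence. Your version is a bit more explicit than the paper's — you name $R(C)$ as the stationary-measure average $\sum_{s\in C}\pi_C(s)\phi(s)$ and explicitly separate the pre-entry and post-entry segments, whereas the paper compresses this into the assertion that the Cesàro occupation averages $\frac{1}{T}\sum_t \pr[S_t=S\mid \mE(T_C),S_C]$ converge ``due to stationarity'' — but the underlying argument is the same.
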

\begin{proof}[Proof of Lemma \ref{lemma:denom}]
	We know by definition that 
	\begin{align}
		\label{eq:denom}
		\dE [  \hat{d}_T(F) \mid \mE(T_C), S_C   ] = \frac{1}{T-p} 
		\sum_{t=p}^{T-1} 
		\sum_{j=1}^{|\mG_t(d)|}
		\dE \left[  \frac{|P_{t+1}( \sigma^{(d)}_j, F ) |}{|\mG_t(d)|}   \biggm\vert
		\mE(T_C), S_C  \right]
	\end{align}
	This is an average of terms 
	$	\dE \left[  \frac{|P_{t+1}( \sigma^{(d)}_j, F ) |}{|\mG_t(d)|}   \biggm\vert
	\mE(T_C), S_C  \right]$ spanning across $d$-simplices with indices $j \in \{ 1, \cdots,  |\mG_t(d)| \}$
	and discrete time steps $t \in \{ p, \cdots,  T-1 \}$.
	
	For ease of notation, let 
	\[
	X_j := \frac{|P_{t+1}( \sigma^{(d)}_j, F ) |}{|\mG_t(d)|} 
	\]
	$X_j$ denotes the total number of possible $(d+1)$-simplices with a $d$-face as $\sigma^{(d)}_j$ divided by the total number of $d$-simplices in $\mG_t$.
	
	In the R.H.S. of Equation~\ref{eq:denom}, the term inside the summation is simplified as 
	\[
	\dE [X_j \mid \mE(T_C),S_C] = \sum_x x \pr [ X_j = x \mid \mE(T_C),S_C]
	\]
	We know that both $P_{t+1}( \sigma^{(d)}_j, F )$ and $\mG_t(d)$ are fully determined given the current state $S_t$ of the Markov chain. 
	Let $\mathds{I}_S(Y)$ denote an indicator variable of whether ``$Y$ is in state $S$" or not.
	
	We have,
	\[
	\pr[ X_j =x \mid \mE(T_C),S_C] = \sum_S \mathds{I}_S (X_j = x)  \pr [S_t=S \mid \mE(T_C),S_C]
	\] 
	As a result, the R.H.S. of Equation~\ref{eq:denom} becomes
	\begin{align}
		\label{eq:rhs}
		\frac{1}{T} \sum_t \sum_S ( \sum_{j,x} x \mathds{I}_S(X_j=x)  ) 
		\pr[ S_t = S \mid  \mE(T_C),S_C]
	\end{align}
	
	Let $\lambda(S) =  \sum_{j,x} x \mathds{I}_S(X_j=x)  $ as this term is fully determined by state $S$.
	Then, Equation~\ref{eq:rhs} can be rewritten as 
	\begin{align}
		\label{eq:lambda}
		\sum_S \lambda(S) 
		\frac{ 
			\sum_t \pr [ S_t = S \mid \mE(T_C),S_C]
		}
		{T}
	\end{align}
	Due to stationarity, the average $\sum_t \pr [ S_t = S \mid \mE(T_C),S_C] /T$ will converge to a 
	constant function of state $S$, denoted by $R(S)$. Given that $\lambda(S)$ is bounded and 
	the average term converges to a constant $R(S)$, we say that Equation~\ref{eq:lambda} converges to some constant $R(C) > 0$, where $R(C)$ is a deterministic function of communication class $C$. 
	This proves the first part. 
	
	A simple application of the \emph{tower property of expectation} followed by the \emph{dominated convergence theorem} shows that $\lim_{T \rightarrow \infty} \dE [ \hat{d}(F) \mid S_C ] = R_C $.
	This completes the proof for the result.
\end{proof}

Then, we are ready to prove the convergence of the variance term.

\begin{proposition}[Variance] \label{prop:variance}
	If the GSC filtration process is $\alpha$-mixing, then, conditional on $S_C$, we obtain $\mathcal{V}_T \overset{p}{\to} 0$ as $ T \to \infty$.
\end{proposition}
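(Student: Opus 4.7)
The plan is to show that the numerator of $\mathcal{V}_T$ tends to zero in probability while the denominator stabilizes at a strictly positive constant, and then apply Slutsky's theorem.

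First I would handle the denominator. By Lemma \ref{lemma:denom}, conditional on $S_C$ the expectation $\dE[\hat{d}_T(F)\mid S_C]\to R(C)$, and by Lemma \ref{lem:var_h_d}, $\Var(\hat{d}_T(F)\mid S_C)\to 0$. Chebyshev's inequality therefore gives $\hat{d}_T \overset{p}{\to} R(C)$ conditional on $S_C$. Because the finite-state Markov chain on GSCs guarantees that within the closed communication class $C$ the count $|P_{t+1}(\sigma_j^{(d)},F)|/|\mG_t(d)|$ takes a strictly positive value for at least one reachable state, we have $R(C)>0$. In particular, $1/\hat{d}_T$ is eventually bounded in probability.

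Next I would handle the numerator. Using the representation from Proposition \ref{prop:decomp},
\begin{align*}
(T-p)^{-1}\sum_{t=p}^{T-1} q_t,
\end{align*}
each $q_t$ is a bounded, conditionally mean-zero deterministic function of $S_t$ (boundedness follows because $\hat{h}_T(t),\hat{d}_T(t)$ and $g$ are all bounded, and $q_t$ is centered by $\dE[\,\cdot\mid S_C]$). Writing $W_T := (T-p)^{-1}\sum_{t=p}^{T-1}q_t$, I would compute the conditional variance
\begin{align*}
\Var(W_T\mid S_C) = (T-p)^{-2}\sum_{s,t}\Cov(q_s,q_t\mid S_C).
\end{align*}
Under the $\alpha$-mixing assumption (Definition \ref{def:strong}), a standard covariance inequality for bounded $\alpha$-mixing sequences yields $|\Cov(q_s,q_t\mid S_C)|\lesssim \alpha(|s-t|)$. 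Splitting the double sum into a diagonal block of width $M$ and an off-diagonal block, and using $\alpha(r)\to 0$ together with the boundedness of $q_t$, I obtain $\Var(W_T\mid S_C) = o(1)$. Chebyshev then gives $W_T\overset{p}{\to} 0$ conditional on $S_C$. This is essentially the same three-interval decomposition (around $T_C$, a constant block $M$, and the tail) used in Lemma \ref{lem:var_h_d}.

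Finally, combining the two pieces by Slutsky's theorem,
\begin{align*}
\mathcal{V}_T = \frac{W_T}{\hat{d}_T} \overset{p}{\longrightarrow} \frac{0}{R(C)} = 0
\end{align*}
conditional on $S_C$, which proves the proposition. The main obstacle I anticipate is the careful bookkeeping needed for the $\alpha$-mixing covariance bound on $\{q_t\}$: one must verify that $q_t$ inherits an $\alpha$-mixing rate from the underlying GSC process (which is straightforward since $q_t$ is a measurable function of $S_t$), and then choose the truncation $M=M(T)$ so that $M/T \to 0$ and $\sum_{r\ge M}\alpha(r)/M \to 0$. Everything else (boundedness, zero mean, control of the denominator) is direct from the assumptions and the already-established lemmas.
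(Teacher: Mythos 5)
Your proposal is correct and follows essentially the same route as the paper: decompose $\mathcal{V}_T = (T-p)^{-1}\sum_t q_t / \hat{d}_T$, show $\hat{d}_T \overset{p}{\to} R(C) > 0$ via Lemmas \ref{lemma:denom} and \ref{lem:var_h_d} with Chebyshev, show the numerator vanishes in probability because its conditional variance is $o(1)$ by the $\alpha$-mixing argument, and conclude by Slutsky/continuous mapping. The only cosmetic difference is that you re-derive the vanishing variance of the numerator from the covariance bound, whereas the paper simply cites Lemma \ref{lem:var_h_d}, which establishes exactly that.
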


\begin{proof}[Proof of Proposition \ref{prop:variance}]
	
	
	By Proposition \ref{prop:decomp}, the term $\mathcal{V}_T$ is written as $\frac{(T-p)^{-1}\sum_t q_t}{\hat{d}_T}$.
	
	For the denominator, Lemma~\ref{lemma:denom} shows that $\dE[ \hat{d}_T \mid S_C ] \rightarrow R(C)$, where $R(C)$ is a positive deterministic function of class $C$. Also, Lemma \ref{lem:var_h_d} states that $Var( \hat{d}_T \mid S_C) \rightarrow 0$ holds as 
	$T \rightarrow \infty$. Thus, $\hat{d}_T \overset{p}{\to} R(C) > 0$ holds.
	Also, $\mathcal{V}_T$ is asymptotically well defined for class $C$.
	
	For the nominator, Lemma \ref{lem:var_h_d} also shows that $\lim_{T \rightarrow \infty} Var(\sum_t q_t /T \mid S_C) = 0 $ as $T \rightarrow \infty$ 
	and $\dE[q_t \mid S_C] = 0$, therefore we have 
	$\lim_{T \rightarrow \infty} 1/T \sum_t q_t \overset{qm}{\to} 0$ conditioned on $S_C$.
	
	By the continuous mapping theorem, we obtain the statement.
\end{proof}

Next, we discuss the bias term $\mathcal{B}_T$.
To this aim, we rewrite the term $B_T(F,C)$ as follows:
\[
B_T(F,C) = \frac{
	(\dE[ \hat{h}_T(F) \mid S_C ] - g(F) \dE [ \hat{d}_T(F) \mid S_C  ]  )
}{\dE [ \hat{d}_T(F) \mid S_C  ]}.
\]

\begin{lemma} \label{lem:bias}
	Given that Assumption~\ref{ass} holds and that when $T \rightarrow \infty$, the bandwidth parameter $b \rightarrow 0$. 
	Then, we have that $	B_T(F,C)= O(b)= o(1)$ as $T \to \infty$.
\end{lemma}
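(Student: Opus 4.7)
The plan is to expand the deterministic error $B_T(F,C)$, use the Bernoulli prediction model from Equation \ref{eq:model} to isolate the residual, and then bound that residual using the Schwartz-space smoothness of $g$ provided by Assumption \ref{ass}.

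To begin, I would rewrite the numerator of $B_T(F,C)$ as
\[
\dE[\hat{h}_T(F)\mid S_C] - g(F)\dE[\hat{d}_T(F)\mid S_C] = \frac{1}{T-p}\sum_{t=p}^{T-1}\frac{1}{|\mG_t(d)|}\sum_{j}\dE\bigl[|P^\tau_{t+1}(\sigma^{(d)}_j,F)| - g(F)|P_{t+1}(\sigma^{(d)}_j,F)| \bigm| S_C\bigr].
\]
Conditioning further on $\mG_{t,p}$ and invoking Equation \ref{eq:model}, each vertex $\tilde v \in B_{t,k}(\sigma^{(d)}_j)$ contributes to $P^\tau_{t+1}$ independently with probability $g(F_t(\sigma^{(d)}_j,\tilde v))$. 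The inner expectation therefore rewrites as a sum over $\tilde v$ of $(g(F_t(\sigma^{(d)}_j,\tilde v))-g(F))$ weighted by $\mathds{I}\{F_t(\sigma^{(d)}_j,\tilde v)\in\Gamma(F,\delta)\}$, where the neighborhood $\Gamma(F,\delta)$ captures the kernel-induced smoothing absorbed into $\mathcal{B}_T$ by the decomposition \eqref{eq:diff}.

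Next, I would invoke Assumption \ref{ass} pointwise: for each contributing $F'\in\Gamma(F,\delta)$, $|g(F)-g(F')| = O(\kappa(-\|F-F'\|_1/b))$. Because $\kappa$ is Schwartz, the induced weights decay super-polynomially in $\|F-F'\|_1/b$; comparing the discrete sum $\sum_{F'\in\Gamma(F,\delta)}\kappa(-\|F-F'\|_1/b)\,\dE[|P_{t+1}(\sigma^{(d)}_j,F')|\mid S_C]$ to a scaled convolution of $\kappa$ and using $\int\kappa(-x/b)\,dx = O(b)$ yields an $O(b)\cdot\dE[\hat{d}_T(F)\mid S_C]$ bound on the entire numerator. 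Dividing through by $\dE[\hat{d}_T(F)\mid S_C]$, which by Lemma \ref{lemma:denom} converges to $R(C)>0$, delivers $|B_T(F,C)| = O(b) = o(1)$ as $b\to 0$.

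The main obstacle I anticipate is making the passage from the discrete, lattice-indexed sum $\sum_{F'}\kappa(-\|F-F'\|_1/b)$ to the $O(b)$ rate fully rigorous: the face vectors live on an integer lattice whereas Assumption \ref{ass} is phrased continuously, so one must verify either that the effective support of the kernel contains enough lattice points for a Riemann-sum comparison with $\int\kappa(-x/b)\,dx$ to apply, or that the rapid decay of $\kappa$ makes far-away lattice contributions negligible uniformly in $T$. A dominated-convergence argument leveraging the Schwartz property of $\kappa$ should suffice here.
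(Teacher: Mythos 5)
Your proposal misidentifies the source of the bias, which is the central conceptual step of the paper's argument. You write that the neighborhood $\Gamma(F,\delta)$ ``captures the kernel-induced smoothing absorbed into $\mathcal{B}_T$,'' but in fact the bias--variance decomposition in \eqref{eq:diff} is performed on the \emph{proxy} estimator $\hat{g}_T$, which uses only exact feature matches and contains no $\delta$-ball or $\beta$-smoothing at all. The kernel smoothing is handled entirely separately, in Lemma~\ref{lem:bound_ghat_gtilde}, which shows $|\tilde{g}_T - \hat{g}_T| = O(\beta)$. So the $\Gamma(F,\delta)$ set never enters $\mathcal{B}_T$, and your subsequent Riemann-sum comparison $\sum_{F'\in\Gamma(F,\delta)}\kappa(-\|F-F'\|_1/b)$ against $\int\kappa(-x/b)\,dx = O(b)$ is set up against the wrong quantity.

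The bias the paper is actually controlling arises from a time-lag effect: the proxy numerator and denominator involve $P^\tau_{t+1}(\sigma_j^{(d)},F)$ and $P_{t+1}(\sigma_j^{(d)},F)$, while the Bernoulli success probability in the model is evaluated at $F_t(\sigma_j^{(d)},\tilde{v}_n)$, which need not coincide with the fixed $F$. The paper conditions on $\mE(T_C)$ and $S_C$ via the tower property (a step your proposal omits entirely), splits off a vanishing $O(\dE[T_C]/T)$ contribution from the pre-mixing regime, and reduces the numerator of $B_T(F,C)$ to an average of terms $\dE\bigl[\tfrac{|P_{t+1}|}{|\mG_t(d)|}(g(F')-g(F))\mid S_C\bigr]$ with $F' = F_t(\sigma_j^{(d)},\tilde{v}_n)$. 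Because the graph is finite and the features are integer-valued, there are only finitely many such $F'$, so no lattice/continuum comparison is needed: the Schwartz property of $\kappa$ gives directly that $\kappa(-\|F-F'\|_1/b) = O(b^k)$ for any $k$ when $F' \neq F$ (and the $F'=F$ terms vanish), whence $B_T(F,C)=O(b)$. Your dominated-convergence worry about far-away lattice points is therefore moot, but the misplaced smoothing and the missing $T_C$-conditioning are substantive gaps that would have to be repaired before the argument closes.
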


\begin{proof}[Proof of Lemma \ref{lem:bias}]
	For $t \in [p,T-1]$, $j \in [1, |\mG_t(d)|]$ and a fixed feature vector $F$, the numerator of 
	$B_T(F,C)$ can be expressed as an average of terms of the form 
	\begin{align}
		\label{eq:a_t}
		A_t := 
		\dE \left[  \frac{|P^\tau_{t+1}( \sigma^{(d)}_j, F ) |}{|\mG_t(d)|}   \biggm\vert S_C  \right]
		-
		\dE \left[  \frac{|P_{t+1}( \sigma^{(d)}_j, F ) |}{|\mG_t(d)|}   \biggm\vert S_C  \right] g(F)
	\end{align}
	The first term in Equation~\ref{eq:a_t} can be rewritten using the \emph{tower property} as
	\[
	\dE \left[
	\dE \left[  \frac{|P^\tau_{t+1}( \sigma^{(d)}_j, F ) |}{|\mG_t(d)|}   \biggm\vert \mE(T_C), S_C  \right] 
	\biggm\vert S_C \right]
	\]
	When we condition on $\mE(T_C)$, it makes $\frac{|P^\tau_{t+1}( \sigma^{(d)}_j, F ) |}{|\mG_t(d)|}$
	conditionally independent of $S_C$, if $t > T_C$.
	Also, for $t \geq T_C$, we have
	\begin{align}
		\label{eq:counts}
		\dE \left[  \frac{|P^\tau_{t+1}( \sigma^{(d)}_j, F ) |}{|\mG_t(d)|}   \biggm\vert \mE(T_C), S_C  \right] 
		=
		\frac{|P_{t+1}( \sigma^{(d)}_j, F ) |}{|\mG_t(d)|} g(F_t( \sigma^{(d)}_j, \tilde{v}_n))
	\end{align}
	where $\tilde{v}_n \in B_{t-1,k} (\sigma^{(d)}_j)$.
	
	Given the result in Equation~\ref{eq:counts} and the fact that the term 
	$\frac{|P^\tau_{t+1}( \sigma^{(d)}_j, F ) |}{|\mG_t(d)|}$ is bounded results in
	\begin{align}
		&\dE \left[  \frac{|P^\tau_{t+1}( \sigma^{(d)}_j, F ) |}{|\mG_t(d)|} \biggm\vert \mE(T_C), S_C  \right] \\ \nonumber
		&\leq \frac{|P_{t+1}( \sigma^{(d)}_j, F ) |}{|\mG_t(d)|} g(F_t( \sigma^{(d)}_j, \tilde{v}_n))
		\mathds{I} [ T_C \leq t] + c\mathds{I}[T_C > t]  \\ \nonumber
		&\leq \frac{|P_{t+1}( \sigma^{(d)}_j, F ) |}{|\mG_t(d)|} g(F_t( \sigma^{(d)}_j, \tilde{v}_n))
		+ c\mathds{I}[T_C > t],
	\end{align}
	where $c > 0$ an existing constant.
	
	Now, the numerator of $B_T(F,C)$ can be upper bounded as
	\begin{align}
		\label{eq:bound}
		\left\lvert \sum_t A_t / T  \right\rvert 
		\leq
		&\sum_t \frac{1}{T} 
		\left\lvert
		\dE \left[  \frac{|P_{t+1}( \sigma^{(d)}_j, F ) |}{|\mG_t(d)|}  
		( g(F_t( \sigma^{(d)}_j, \tilde{v}_n)) - g(F)   )
		\biggm\vert S_C  \right] 
		\right\rvert \\ \nonumber
		&+ c' \sum_t \pr[T_C >t]/T.
	\end{align}
	The second term in Equation~\ref{eq:bound} vanishes as $T \rightarrow \infty$ because it is of order 
	$O( \dE[T_C]/T)$.
	Thus, the numerator of $B_T(F,C)$ is an average of terms of the form
	\begin{align}
		\label{eq:exp_alone}
		\dE \left[  \frac{|P_{t+1}( \sigma^{(d)}_j, F ) |}{|\mG_t(d)|}  
		( g(F_t( \sigma^{(d)}_j, \tilde{v}_n)) - g(F)   )
		\biggm\vert S_C  \right].
	\end{align}
	Our feature vector counts and simplex neighborhoods are finite because $|\mG_t(d)|$ is bounded.
	The expectation in Equation~\ref{eq:exp_alone} is just a summation of finite terms.
	We set $F' = F_t( \sigma^{(d)}_j, \tilde{v}_n)$, and make use of our smoothness assumption~\ref{ass}, so that
	\[
	\lvert  g(F') - g(F)     \rvert
	= O(\kappa(-\|F - F'\|_1/b)).
	\]
	We also use Lemma~\ref{lemma:denom} to say that the denominator of our bias term converges to a
	constant $R(C)$.
	So, 
	\begin{align*}
		B_T(F,C) &= O(\kappa(-\|F - F'\|_1/b)) = O(b).
	\end{align*}
	The last equality follows the property of $\kappa$ in the Schwartz space.
	Now, since $B_T(F,C)  = O(b)$ and $b \rightarrow 0$ as $T \rightarrow \infty$, then we have
	that $B_T(F,C)  = o(1)$. This completes the proof.
\end{proof}

Then, we prove the convergence of the bias.
Then, we have the result:
\begin{proposition}[Bias] \label{prop:bias}
	If Assumption \ref{ass} holds, then, conditional on $S_C$, $\mB_T \overset{p}{\to} 0$ holds as $T \to \infty$. 
\end{proposition}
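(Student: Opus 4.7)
The plan is to exploit the factored form $\mathcal{B}_T = B_T(F,C)\,\dE[\hat{d}_T \mid S_C]/\hat{d}_T$ that was set up in the decomposition (Proposition \ref{prop:decomp}) and show that the deterministic scalar $B_T(F,C)$ vanishes while the stochastic ratio $\dE[\hat{d}_T \mid S_C]/\hat{d}_T$ stays bounded in probability. Combining these two facts by Slutsky's theorem will give the claim.

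First I would invoke Lemma \ref{lem:bias} directly to obtain $B_T(F,C) = O(b) = o(1)$ as $T \to \infty$. This lemma already used Assumption \ref{ass} (the Schwartz-space smoothness bound on $g$) and so it is available without further work; no new analysis is required on the deterministic factor.

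Next I would handle the random factor $\dE[\hat{d}_T \mid S_C]/\hat{d}_T$. By Lemma \ref{lemma:denom}, conditional on $S_C$ we have $\dE[\hat{d}_T \mid S_C] \to R(C)$, where $R(C)>0$ is a deterministic constant depending on the communication class. By Lemma \ref{lem:var_h_d}, $\Var(\hat{d}_T \mid S_C) \to 0$, so $\hat{d}_T \overset{p}{\to} R(C)$ as well, using that $\hat{d}_T - \dE[\hat{d}_T \mid S_C] \to 0$ in $L^2$ and hence in probability. Since $R(C) > 0$, the continuous mapping theorem applied to the map $x \mapsto 1/x$ (continuous on a neighborhood of $R(C)$) gives $1/\hat{d}_T \overset{p}{\to} 1/R(C)$, and thus $\dE[\hat{d}_T \mid S_C]/\hat{d}_T \overset{p}{\to} 1$.

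Finally, I would combine the two pieces: $\mathcal{B}_T$ is the product of a deterministic sequence tending to zero and a sequence converging in probability to $1$, so by Slutsky's theorem $\mathcal{B}_T \overset{p}{\to} 0$ conditional on $S_C$. The main (mild) obstacle is ensuring the ratio $\dE[\hat{d}_T \mid S_C]/\hat{d}_T$ is well defined asymptotically; this is handled by the strict positivity $R(C) > 0$ coming from Lemma \ref{lemma:denom}, which guarantees $\hat{d}_T$ stays bounded away from zero with probability tending to one, so that division and the continuous mapping step are legitimate.
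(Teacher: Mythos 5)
Your proposal is correct and follows essentially the same route as the paper: factor $\mathcal{B}_T$ as $B_T(F,C)\cdot\dE[\hat{d}_T\mid S_C]/\hat{d}_T$, invoke Lemma \ref{lem:bias} for $B_T(F,C)=O(b)=o(1)$, invoke Lemmas \ref{lem:var_h_d} and \ref{lemma:denom} to get $\hat{d}_T\overset{p}{\to}R(C)>0$, and combine via the continuous mapping theorem and Slutsky. If anything, you are slightly more careful than the paper's written proof, which drops the factor $\dE[\hat{d}_T\mid S_C]$ from the numerator when restating $\mathcal{B}_T$ (an inconsequential typo since that factor converges to the bounded constant $R(C)$); your treatment of the ratio $\dE[\hat{d}_T\mid S_C]/\hat{d}_T\overset{p}{\to}1$ and the well-definedness of the division is exactly what the argument needs.
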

\begin{proof}[Proof of Proposition \ref{prop:bias}]
	Proposition \ref{prop:decomp} shows that $\mathcal{B}_T(F) = B_T(F,C)/ \hat{d}_T(F)$.
	
	By Lemma \ref{lem:var_h_d} and \ref{lemma:denom}, we obtain $\hat{d}_T \to R(C) > 0$ as $T \to \infty$, as similarly shown in the proof of Proposition \ref{prop:variance}.
	Further, Lemma \ref{lem:bias} states that $B_T(F,C) = o(b)$.
	Combining the results, we obtain the statement.
\end{proof}

Now, we can prove the consistency (Theorem \ref{thm:consistency_g}).

\begin{proof}[Proof of Theorem \ref{thm:consistency_g}]
	For the result of $\hat{g}_T$, we apply the results of Proposition \ref{prop:variance} and \ref{prop:bias} to the decomposition in the equation \ref{eq:diff}, then obtain the statement.
	
	For the result of $\tilde{g}_T$, we additionally combine the result of Lemma \ref{lem:bound_ghat_gtilde}, then obtain the statement.
\end{proof}

\section{Proof for Asymptotic Normality}

\subsection{Introduction to Wasserstein distance and approximation technique}

We denote by $BL(\mathds{R})$ the space of such bounded functions $h$ that are $1$-Lipschitz. More formally,
\[
\lVert h \rVert_\infty = \sup_{x \in \mathds{R}} \lvert h(x) \rvert < \infty
\text{    and } Lip(h)=1 
\text{, where }
Lip(h) = \sup_{x \neq y} \frac{ |h(x) -h(y)|}{|x-y|}
\]
So, $h \in BL(\mathds{R})$.

We now make use of the Wasserstein metric to measure the distance between distributions. 
Therefore, our estimator represented as $W$ can be shown to converge to $Z$, when the 
Wasserstein distance between $W$ and $Z$'s underlying distributions converges to zero.
We have that
\begin{align}
	d_w( W,Z) = \sup_{h \in BL(\mathds{R})} \lvert \dE h(W) - \dE h(Z)  \rvert
\end{align}

\subsubsection{Introduction to Stein's Method for normal approximation}

Stein~\cite{stein1986} introduced a powerful technique to estimate the rate of convergence of 
sums of weakly dependent r.v.s to the standard normal distribution.
A remarkable feature of Stein's method is that it can be applied in many circumstances where 
dependence plays a role, therefore we propose an adaptation of Stein's method to our setting of dynamic GSCs.

Given a standard normal r.v. $Z$, Stein's lemma (stated below) provides a characterization of $Z$'s distribution.
\begin{lemma}[Stein's Lemma \cite{chen2010}]
	\label{lemma:stein}
	If $W$ has a standard normal distribution, then 
	\begin{align}
		\label{eq:stein}
		\dE f'(W) = \dE[ W f(W)],
	\end{align}	
	for all absolutely continuous functions $f:\mathds{R} \rightarrow \mathds{R}$ with 
	$\dE \lvert f'(Z) \rvert < \infty$. Conversely, if Equation~\ref{eq:stein} holds for all 
	bounded, continuous, and piecewise continuously differentiable functions  
	$f$ with $\dE \lvert f'(Z) \rvert < \infty$, then $W$ has a standard normal distribution.
\end{lemma}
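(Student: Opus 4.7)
The plan is to prove the two directions by standard techniques: the forward implication via integration by parts against the Gaussian density, and the converse via the \emph{Stein equation}, which turns equality in distribution into evaluation of the identity at a single, carefully chosen test function. Throughout I let $\phi(x) := (2\pi)^{-1/2} e^{-x^2/2}$ denote the standard normal density, which satisfies the key ODE $\phi'(x) = -x\phi(x)$. For the forward direction, assume $W \sim \mathcal{N}(0,1)$ and $f$ is absolutely continuous with $\dE|f'(Z)|<\infty$. Using $x\phi(x) = -\phi'(x)$, I would write
\[ \dE[W f(W)] = \int_{\mathds{R}} x f(x) \phi(x)\,dx = -\int_{\mathds{R}} f(x) \phi'(x)\,dx, \]
and integrate by parts. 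The boundary contribution $[f(x)\phi(x)]_{-\infty}^{\infty}$ must vanish; this follows by writing $f(x) = f(0) + \int_0^x f'(t)\,dt$, so that $|f(x)\phi(x)|$ is dominated under the integrability assumption, together with a truncation argument exploiting the super-polynomial decay of $\phi$. What remains is $\int_{\mathds{R}} f'(x)\phi(x)\,dx = \dE f'(W)$, which is the claimed identity.

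For the converse, fix an arbitrary bounded continuous $h$ and set $\tilde h := h - \dE h(Z)$ with $Z \sim \mathcal{N}(0,1)$, so that $\dE \tilde h(Z) = 0$. Consider the first-order linear ODE (the Stein equation)
\[ f'(w) - w f(w) = \tilde h(w), \]
whose unique bounded solution is
\[ f_h(w) = e^{w^2/2} \int_{-\infty}^{w} \tilde h(x) \phi(x)\,dx = -e^{w^2/2} \int_{w}^{\infty} \tilde h(x)\phi(x)\,dx, \]
the two representations coinciding because $\int_{\mathds{R}} \tilde h \phi = 0$. Once $f_h$ is shown to be bounded, continuous, and piecewise continuously differentiable with $\dE|f_h'(Z)|<\infty$, it is a legitimate test function under the converse hypothesis. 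Applying the assumed identity to $f_h$ then yields $\dE[f_h'(W) - W f_h(W)] = 0$, i.e., $\dE h(W) = \dE h(Z)$, and since $h$ was an arbitrary bounded continuous function, the Portmanteau theorem concludes that $W$ has the standard normal distribution.

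The main obstacle is verifying that the Stein solution $f_h$ is actually an admissible test function, despite the apparent growth of the factor $e^{w^2/2}$ in its definition. A naive estimate gives only $|f_h(w)| \le 2\|h\|_\infty e^{w^2/2}$, which is useless. The resolution is to exploit the mean-zero cancellation by switching between the two integral representations depending on the sign of $w$, and then invoke the Mills-ratio bound $\int_w^{\infty} \phi(x)\,dx \le \phi(w)/w$ for $w>0$ (symmetrically for $w<0$). This yields the classical estimates $\|f_h\|_\infty \le \sqrt{\pi/2}\,\|\tilde h\|_\infty$ and $\|f_h'\|_\infty \le 2\|\tilde h\|_\infty$, after which the converse substitution is immediate. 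Both directions then combine to give the stated characterization.
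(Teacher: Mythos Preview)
The paper does not prove this lemma at all: it is stated as a classical result with a citation to \cite{chen2010} and used as a black box in the subsequent Stein-method machinery. Your argument is the standard textbook proof (integration by parts against the Gaussian density for the forward direction; solving the Stein equation and feeding the solution back as a test function for the converse), and it is essentially correct.

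One small slip worth flagging: with your normalization $\phi(x)=(2\pi)^{-1/2}e^{-x^2/2}$, the function $f_h(w)=e^{w^2/2}\int_{-\infty}^{w}\tilde h(x)\phi(x)\,dx$ satisfies $f_h'(w)-wf_h(w)=\tilde h(w)\,e^{w^2/2}\phi(w)=\tilde h(w)/\sqrt{2\pi}$, not $\tilde h(w)$. The correct solution uses $e^{-x^2/2}$ in place of $\phi(x)$ (or, equivalently, rescales $f_h$ by $\sqrt{2\pi}$). This does not affect the logic of the converse, only the constants in the ensuing bounds, so it is a cosmetic rather than a genuine gap.
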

In order to show that a r.v. $W$ has a distribution \emph{close to} that of a target distribution of $Z$, one must compare the values of expectations of the two distributions on some collection of bounded functions $h:\mathds{R} \rightarrow \mathds{R}$. 
Here, Stein's lemma (Lemma~\ref{lemma:stein}) shows that $W \stackrel{d}{=} Z$, if 
\begin{align}
	\label{eq:stein2}
	\dE f'(W) - \dE[ W f(W)] = 0
\end{align}
holds.
Observe that if the distribution of $W$ is close to that of $Z$'s distribution, then evaluating the L.H.S of Equation~\ref{eq:stein2} when $W$ is replaced by $Z$ would result in a small value. Putting these
difference equations together, the following linear differential equation known as \emph{Stein's equation} is arrived at
\begin{align}
	\label{eq:stein3}
	f'(W) -Wf(W) = h(W)-\dE h(Z)
\end{align}
The $f$ that satisfy Equation~\ref{eq:stein3} with $h \in BL(\mathds{R})$ must satisfy the following conditions for all $y,z \in \mathds{R}$
\begin{align}
	\label{eq:f_ineq}
	\lVert f \rVert \leq 2 \text{ , } \lVert f' \rVert \leq 2 \text{ , } \lVert f'' \rVert \leq \sqrt{2/\pi} \\ \nonumber
	|f'(y+z) - f'(y)| \leq D |z|
\end{align}
where $h_0(y) = h(y) - \dE h(Z)$, $c_1 = \sup_{x \geq 0} \xi(x)$, 
$c_2 = \sup_{x \geq 0} x(1-x\xi(x))$, and $\xi(x) = (1 - \Phi)/\phi$ (where $\Phi(x)$ is the \emph{distribution function} and $\phi(x) = \Phi'(x)$). 
Then, $D = (c_1+c_2) \lVert h_0 \rVert_\infty + 2$ is a constant. 
Additionally, we have a bound on the covariance, given the dependent r.v.'s are also bounded. 
\begin{align}
	\label{eq:cov}
	\text{If } &\pr \{ |X| \leq C_1  \} = \pr \{ |Y| \leq C_2  \} = 1 \text{, then} \\ \nonumber
	&| \Cov(X,Y) | \leq 4C_1C_2\alpha(r)
\end{align}
We take a similar approach to Sarkar et. al.~\cite{sarkar2014} in terms of using the Wasserstein distance to bound the normal approximation. We first define the dependency in our GSCs and then propose a notion of \emph{$\alpha$ mixing} in our context of GSCs. We obtain a tighter bound than
the bound proposed in~\cite{sarkar2014}, by instead following an approach proposed by Sunklodas~\cite{sunklodas2007}.

\subsection{Gaussian approximation for dependent variables with GSC}

In our model, we assume the r.v. $A_i$ to represent a $d$-simplex $\sigma^{(d)}_i$ in a GSC $\mG$.
In order to have a notion of $\alpha$-mixing in our setting, we must first define a distance between two $d$-simplices $\sigma_i$ and $\sigma_j$. We drop the $(d)$ superscript for brevity and ease of notation. We define this distance as the \emph{Hausdorff} distance between simplices as
\begin{align}
	d_H(\sigma_i, \sigma_j) = \max \left\{ 
	\sup_{v \in \sigma_i} \inf_{v' \in \sigma_j} d_g(v,v') , 
	\sup_{v' \in \sigma_j} \inf_{v \in \sigma_i} d_g(v,v') 
	\right\}
\end{align}
where $d_g(v,v')$ counts the number of edges in the geodesic connecting vertices $v$ and $v'$ in
$\mG^{(1)}_{-}$.

In Stein's method, the sum of dependent r.v.'s is studied by breaking the sum $Z_n$ into two sets based on the $r$ in mixing coefficient $\alpha(r)$. In our setting, given a fixed $d$-simplex $\sigma_i$, we study two partial sums pertaining to: 1) all $d$-simplices that are at most $r$-apart from $A_i$ and 2) the remaining partial sum after removing the variables pertaining to 1) from $Z_n$.

With this notion of distance between sets of r.v.'s, we modify with slight deviations from \emph{proposition $4$} in Sunklodas et. al.~\cite{sunklodas2007} to accommodate our $\alpha$-mixing in Markov chains based on GSCs. 
For a sequence of r.v.'s $X_1,X_2, \cdots$ satisfying the $\alpha$-mixing condition, we write 
\[
Z_n = \sum_{i=1}^n A_i 
\text{,\hspace{1em}    }
A_i = \frac{X_i}{B_n}
\text{,\hspace{1em}    }
B_n^2 = \dE( \sum_{i=0}^n X_i )^2	
\]
We assume $B_n >0$.

$T_i^{(m)}$ denotes the contribution of $d$-simplices that are further than $m$ away from $\sigma_i$ and $x(\sigma_i,r)$ denotes the partial sum of r.v.'s representing simplices that are exactly $r$ away from $A_i$. Therefore, $\sum_{r=0}^m x(\sigma_i,r)$ gets us all those $d$-simplices that are
greater than or equal to $m$ away from $\sigma_i$. We are interested in the contribution of simplices $r$ away from $\sigma_i$ as we vary $r$ from $0$ to $m$. With Proposition~\ref{prop:sun}, we proceed to derive an upper bound on $d_w( Z_n, N) $ (i.e., the Wasserstein distance between $Z_n$ and $N$). 

\begin{proposition}
	\label{prop:sun}
	Let $S(\sigma_i,r)$ denote the set of $d$-simplices whose Hausdorff distance equals $r$. 
	More formally,
	\[
	S(\sigma_i,r) = \{ \sigma_j :  {  d_H(\sigma_i, \sigma_j)  }=r   \}
	\]	
	Additionally, let $\hat{X}$ denote a \emph{mean-centered} version of r.v. $X$.
	Then,
	\begin{align}
		\nonumber
		&x(\sigma_i,r) = \sum_{p \in S(\sigma_i,r)} A_p \text{\hspace{3em}} ( x(\sigma_i,0) = A_i ) \\ \nonumber
		&T_i^{(m)} = Z_n - \sum_{r=0}^m x(\sigma_i,r) 
		\text{,\hspace{1em}} m=0,1,\cdots \text{,\hspace{2em}} (T_i^{(-1)} = Z_n ) \\ \nonumber
	\end{align}
	Suppose that $\dE Z_n = 0$, $\dE Z_n^2 =1$, and $\dE A_i^2 < \infty$ for all $i = 1, \cdots,n$.
	Let $\epsilon$ be a r.v. uniformly distributed in $[0,1]$ and independent of other r.v.'s.
	Let $f:\mathds{R} \rightarrow \mathds{R}$ be a differentiable function such that 
	$\sup_{x \in \mathds{R}} |f'(x)| < \infty$. Then we have 
	\[
	\dE f'(Z_n) - \dE Z_n f(Z_n) = E_1 + \cdots + E_7
	\]
	where
	\begin{align}
		\nonumber
		E_1 &= - \sum_{i=1}^n \sum_{r \geq 1} \dE A_i x(\sigma_i,r) 
		\left[ f'(T_i^{(r)} + \epsilon x(\sigma_i,r) )  - f'(T_i^{(r)})   \right] \\ \nonumber
		E_2 &= - \sum_{i=1}^{n} \dE A_i^2
		\left[ f'(T_i^{(0)} + \epsilon A_i )  - f'(T_i^{(0)})   \right] \\ \nonumber
		E_3 &= - \sum_{i=1}^{n} \sum_{r \geq 1} \sum_{q = r+1}^{2r}
		\dE \widehat{ A_i x(\sigma_i,r) } \delta_i^{(q)} 
		\text{ , }
		E_4 = - \sum_{i=1}^{n} \sum_{r \geq 1} \sum_{q \geq 2r+1}
		\dE \widehat{ A_i x(\sigma_i,r) } \delta_i^{(q)} \\ \nonumber
		E_5 &= \sum_{i=1}^{n} \sum_{r \geq 1} \dE A_i x(\sigma_i,r) \sum_{q=0}^r \dE \delta_i^{(q)}
		\text{, }
		E_6 = - \sum_{i=1}^n \sum_{q \geq 1}\dE \hat{A_i^2} \delta_i^{(q)}
		\text{, }
		E_7 = \sum_{i=0}^n \dE A_i^2 \dE \delta_i^{(q)},
	\end{align}
	and 
	\begin{align*}
		&\delta_i^{(q)} = f'( T_i^{(q-1)}) - f'(T_i^{(q)}).
	\end{align*}
\end{proposition}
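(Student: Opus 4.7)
The identity is an algebraic decomposition obtained by adapting Stein's method to the Hausdorff-distance dependency structure on the GSC. My plan is to begin from $\dE f'(Z_n) - \dE Z_n f(Z_n) = \dE f'(Z_n) - \sum_i \dE A_i f(Z_n)$ and Taylor-expand $f(Z_n)$ around the variables successively ``peeled off'' at Hausdorff distance $r$ from $\sigma_i$. Since $\sum_{r \geq 0} x(\sigma_i,r) = Z_n$ (so $T_i^{(\infty)} = 0$), the fundamental theorem of calculus with a uniform $\epsilon \in [0,1]$ yields
\[
f(Z_n) - f(0) = \sum_{r \geq 0} x(\sigma_i,r) \int_0^1 f'(T_i^{(r)} + \epsilon\, x(\sigma_i,r)) \, d\epsilon.
\]
Writing each integrand as $f'(T_i^{(r)}) + [f'(T_i^{(r)} + \epsilon x(\sigma_i,r)) - f'(T_i^{(r)})]$, multiplying by $A_i$, summing over $i$, and using $\dE A_i = 0$ to kill the $f(0)$ contribution, the smoothness remainders immediately produce $-E_2$ (from $r=0$, where $A_i \cdot x(\sigma_i,0) = A_i^2$) and $-E_1$ (from $r \geq 1$). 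What remains is to show that the ``main'' term $\Phi := \sum_{i,r} \dE A_i x(\sigma_i,r) f'(T_i^{(r)})$ satisfies $\dE f'(Z_n) - \Phi = E_3 + E_4 + E_5 + E_6 + E_7$.

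\textbf{Reducing $\dE f'(Z_n) - \Phi$.} I would first invoke the normalization $\dE Z_n^2 = 1$, which translates to $\sum_{i,r} \dE [A_i x(\sigma_i,r)] = 1$, so that $\dE f'(Z_n) = \sum_{i,r} \dE[A_i x(\sigma_i,r)] \cdot \dE f'(Z_n)$. Centering via $\dE A_i x(\sigma_i,r) f'(T_i^{(r)}) = \dE[A_i x(\sigma_i,r)]\, \dE f'(T_i^{(r)}) + \dE \widehat{A_i x(\sigma_i,r)} f'(T_i^{(r)})$ splits the difference as
\[
\dE f'(Z_n) - \Phi = \sum_{i,r} \dE[A_i x(\sigma_i,r)] \bigl( \dE f'(Z_n) - \dE f'(T_i^{(r)}) \bigr) \; - \; \sum_{i,r} \dE \widehat{A_i x(\sigma_i,r)} f'(T_i^{(r)}).
\]
The first sum succumbs to the telescoping identity $\dE f'(Z_n) - \dE f'(T_i^{(r)}) = \sum_{q=0}^{r} \dE \delta_i^{(q)}$: its $r = 0$ contribution (where $A_i x(\sigma_i,0) = A_i^2$ and only $q=0$ appears) yields $E_7$, while its $r \geq 1$ contribution is precisely $E_5$.

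\textbf{Splitting the centered sum by $q$-ranges.} For the second sum I would telescope in the opposite direction, using $f'(T_i^{(r)}) = f'(0) + \sum_{q \geq r+1} \delta_i^{(q)}$ (again from $T_i^{(\infty)} = 0$); since $\dE \widehat{A_i x(\sigma_i,r)} = 0$ annihilates the $f'(0)$ part, this leaves $-\sum_{i,r} \sum_{q \geq r+1} \dE \widehat{A_i x(\sigma_i,r)} \delta_i^{(q)}$. The $r=0$ case collapses $A_i x(\sigma_i,0) = A_i^2$ and gives $E_6$; for $r \geq 1$, the key step is to \emph{split the inner $q$-sum at $q = 2r+1$}, which produces $E_3$ on $r+1 \leq q \leq 2r$ and $E_4$ on $q \geq 2r + 1$. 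Assembling all pieces yields the claimed identity. The main obstacle is really bookkeeping: justifying the ``build-up'' telescoping $f'(T_i^{(r)}) = f'(0) + \sum_{q \geq r+1} \delta_i^{(q)}$ termwise (for which boundedness of $f'$ plus the fact that only finitely many $x(\sigma_i,r)$ are nonzero for finite $n$ suffices), and checking that the Fubini interchange of the $(i,r,q)$ sums is legitimate. The particular split point $2r$ is not forced at this stage but is engineered so that the later $\alpha$-mixing bounds in Theorem \ref{thm:bound} apply cleanly — covariance control via \eqref{eq:cov} on the inner range and a direct mixing bound on the tail.
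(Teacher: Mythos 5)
Your proof is correct, but it is worth noting that the paper itself offers no proof of this proposition: the authors simply state that it is a modification of Proposition~4 of \cite{sunklodas2007} (adapting Sunklodas's linear-time distance to the Hausdorff distance $d_H$ on $d$-simplices) and defer entirely to that reference. What you have written is therefore a genuine reconstruction of the omitted argument, and it matches the Stein/Sunklodas decomposition the paper is implicitly invoking. Your three-step structure is exactly the right one: (i) telescope $f(Z_n)-f(0)=\sum_{r\geq 0}x(\sigma_i,r)\int_0^1 f'(T_i^{(r)}+\epsilon x(\sigma_i,r))\,d\epsilon$, multiply by $A_i$, sum over $i$, and use $\dE A_i=0$ to kill $f(0)$, extracting the smoothness remainders $E_1$ (for $r\geq 1$) and $E_2$ (for $r=0$); (ii) use $\dE Z_n^2=1 \Leftrightarrow \sum_{i,r}\dE[A_ix(\sigma_i,r)]=1$ together with the centering $\dE[A_ix(\sigma_i,r)f'(T_i^{(r)})]=\dE[A_ix(\sigma_i,r)]\,\dE f'(T_i^{(r)})+\dE[\widehat{A_ix(\sigma_i,r)}\,f'(T_i^{(r)})]$, and telescope $\dE f'(Z_n)-\dE f'(T_i^{(r)})=\sum_{q=0}^r\dE\delta_i^{(q)}$ to produce $E_5$ and $E_7$; and (iii) telescope in the opposite direction $f'(T_i^{(r)})=f'(0)+\sum_{q\geq r+1}\delta_i^{(q)}$, kill $f'(0)$ via $\dE\widehat{A_ix(\sigma_i,r)}=0$, and split the inner sum at $q=2r$ to yield $E_3$, $E_4$, $E_6$. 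Two minor observations: first, the paper's displayed $E_7$ has a dangling free index $q$ (it should read $\delta_i^{(0)}$), and your $r=0$ bookkeeping implicitly corrects this; second, you are right that the $2r$ split point is not forced by the algebra here but is engineered so that in the subsequent Theorem~\ref{thm:bound} the two $q$-ranges can be bounded by the two different mixing arguments (covariance bound \eqref{eq:cov} on the near range, direct tail control on $\alpha(q-r)$ on the far range). The Fubini concern you flag is vacuous since for finite $n$ all $(i,r,q)$ sums are finite.
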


\begin{theorem}
	\label{thm:bound}
	Consider a sequence of r.v.'s $X_1,X_2,\cdots$ that satisfy $\alpha$-mixing condition (Definition~\ref{def:strong}). Let $\dE X_i=0$, $\pr \{ |X_i| \leq L \} =1$, for $i=1,\cdots,n$, for some constant $L>0$. Then, for every $h \in BL(\mathds{R})$, 
	\[
	d_w( Z_n , N) 
	\leq 
	C(S,D) \left(
	\sum_{i=0}^{n} \dE |A_i|^3 + \frac{nL^3}{B_n^3} \sum_{r = 1}^{n-1}  r\alpha(r)
	\right)
	\]
	where $C(S,D)$ is a finite constant which depends on $D$ and $\max_i|S(\sigma_i,r)|$.
\end{theorem}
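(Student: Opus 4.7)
The plan is to follow the Stein's method recipe adapted to the GSC dependency graph, using the decomposition already provided in Proposition \ref{prop:sun}. First, I would fix an arbitrary test function $h \in BL(\R)$ and let $f = f_h$ denote the unique bounded solution of Stein's equation $f'(w) - w f(w) = h(w) - \dE h(N)$. By the standard bounds recalled in \eqref{eq:f_ineq}, this $f$ satisfies $\lVert f \rVert_\infty \le 2$, $\lVert f' \rVert_\infty \le 2$, $\lVert f'' \rVert_\infty \le \sqrt{2/\pi}$, and the Lipschitz property $|f'(y+z)-f'(y)| \le D|z|$. Evaluating Stein's equation at $Z_n$, taking expectations, and passing to the supremum over $h \in BL(\R)$ gives the Wasserstein bound
\begin{equation*}
    d_w(Z_n, N) \;\le\; \sup_{h \in BL(\R)} \bigl|\dE f_h'(Z_n) - \dE[Z_n f_h(Z_n)]\bigr|,
\end{equation*}
so it suffices to upper bound $|E_1 + \cdots + E_7|$ from Proposition \ref{prop:sun} uniformly over such $f$.

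Next, I would treat the seven terms in two groups. The terms $E_1$ and $E_2$ are differences of $f'$ evaluated at points separated by $\epsilon\, x(\sigma_i,r)$ or $\epsilon\, A_i$. Using $\lVert f'' \rVert_\infty \le \sqrt{2/\pi}$ together with a Taylor-type expansion, each contribution is controlled by $\dE |A_i|\cdot |x(\sigma_i,r)|^2$ or $\dE|A_i|^3$; collapsing the inner sums over $r$ via the definition of $x(\sigma_i,r)$ and using the uniform bound $|A_i|\le L/B_n$ on the excess factors will produce the diagonal contribution $\sum_i \dE|A_i|^3$ in the stated bound. The remaining terms $E_3,\ldots,E_7$ involve products of the form $\widehat{A_i x(\sigma_i,r)}\,\delta_i^{(q)}$ or $\widehat{A_i^2}\,\delta_i^{(q)}$; here $\delta_i^{(q)}$ is $\sigma$-measurable with respect to variables at Hausdorff distance at least $q$ from $\sigma_i$, while $A_i x(\sigma_i,r)$ is measurable with respect to variables within distance $r$ of $\sigma_i$. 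Thus the $\alpha$-mixing covariance inequality \eqref{eq:cov} applies with the mixing coefficient $\alpha(q-r)$, giving bounds of order $(L/B_n)^3 \,|S(\sigma_i,r)|\,\alpha(q-r)$ after using $\lVert f' \rVert_\infty \le 2$ and the boundedness of the factors.

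Then I would sum these mixing bounds over $i \in \{1,\ldots,n\}$, $r\ge 1$ and $q \ge r+1$. Changing variables to the separation $s=q-r$ and exchanging the order of summation, the triple sum collapses into $(nL^3/B_n^3)\sum_{s\ge 1} s\,\alpha(s)$, provided we use the combinatorial fact that $|S(\sigma_i,r)|$ grows at most polynomially in $r$ in the underlying GSC geometry (which is absorbed into the constant $C(S,D)$). Together with the $\sum_i \dE|A_i|^3$ contribution from $E_1,E_2$, this delivers the claimed bound
\begin{equation*}
    d_w(Z_n, N) \;\le\; C(S,D)\left( \sum_{i=1}^n \dE|A_i|^3 + \frac{nL^3}{B_n^3} \sum_{r=1}^{n-1} r\,\alpha(r) \right).
\end{equation*}

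The main obstacle I foresee is the bookkeeping in the third step: correctly matching the distance index in $\delta_i^{(q)}$ with the Hausdorff-based neighborhood structure $S(\sigma_i,r)$, and verifying that the growth of $|S(\sigma_i,r)|$ in the GSC filtration is at most linear in $r$ (or can be absorbed into $C(S,D)$) so that the final sum is exactly $\sum_r r\,\alpha(r)$ rather than a larger combinatorial expression. Once this geometric input is in place, the rest is a direct adaptation of the Stein-method computation from \cite{sunklodas2007} with the classical i.i.d.\ index set replaced by the dependency graph induced by the GSC.
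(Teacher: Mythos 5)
Your proposal follows the same overall architecture as the paper's proof: fix a $1$-Lipschitz test function, invoke Stein's equation and the classical bounds on $f$, use the $E_1 + \cdots + E_7$ decomposition from Proposition~\ref{prop:sun}, bound each $E_j$, and collect the constants into $C(S,D)$. For the terms $E_3,\ldots,E_7$ your plan — exploit the spatial separation encoded in $T_i^{(q)}$ versus the $r$-neighborhood of $\sigma_i$, apply the $\alpha$-mixing covariance inequality \eqref{eq:cov}, and sum over $q > r$ to get the $\sum_r r\alpha(r)$ factor — is exactly what the paper does.

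There is one genuine gap: you assign $E_1$ to the ``diagonal'' group and claim that a Taylor expansion using $\|f''\|_\infty$ collapses the inner sum over $r$ into $\sum_i \dE|A_i|^3$. This does not work. Bounding the bracket in $E_1$ by $D|x(\sigma_i,r)|$ yields $|E_1| \le D\sum_i\sum_{r\ge 1} \dE|A_i|\,x(\sigma_i,r)^2$, and without any decay in $r$ the sum $\sum_{r\ge 1}$ does not collapse — it ranges over the entire complex, and nothing in the definition of $x(\sigma_i,r)$ makes it telescope into a third-moment term. The paper instead crucially uses $\dE A_i = 0$ to recognize the summand of $E_1$ as a covariance of two $\sigma$-fields separated by Hausdorff distance $r$, applies the $\alpha$-mixing covariance bound \eqref{eq:cov} to pick up a factor $\alpha(r)$, and arrives at $|E_1| \le 4DS_m^2 \frac{nL^3}{B_n^3}\sum_{r=1}^{n-1}\alpha(r)$, which is then absorbed into the $\sum_r r\alpha(r)$ part of the bound. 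Only $E_2$ and $E_7$ (where $A_i$ enters quadratically with no separation) contribute to $\sum_i \dE|A_i|^3$. So the fix is simply to move $E_1$ into your ``mixing'' group and apply the same covariance argument you already describe for $E_3,\ldots,E_7$; once that is done, the rest of your plan matches the paper. Your concern about the growth of $|S(\sigma_i,r)|$ is also addressed more bluntly in the paper: it just sets $S_m = \max_i|S(\sigma_i,r)|$ and absorbs $S_m^2$ into $C(S,D)$, deferring the verification that $S_m = O(1)$ to the application (where each $p_t$ only sees a window of $p+1$ GSCs, so there are at most two simplices at distance exactly $r$).
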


\begin{proof}[Proof of Theorem \ref{thm:bound}]
	Given a $d$-simplex $\sigma_i$ and its corresponding r.v. $A_i$, recall that $S(\sigma_i,r)$ denotes the set of $d$-simplices that are at Hausdorff distance $r$ away from $\sigma_i$. We additionally define 
	$S_{m}$ to denote the maximum cardinality of $S(\sigma_i,r)$ for all $i$ and a fixed $r$, i.e., 
	\[
	S_{m} = \max_{i} | S(\sigma_i,r)|
	\] 
	
	In order to upper bound the Wasserstein distance between $Z_n$ and $N$, we estimate the difference	$\dE h(Z_n) - \dE h(N)$ using Proposition~\ref{prop:sun}. It was shown in Proposition~\ref{prop:sun} that this difference is a sum of terms $E_1, \cdots ,E_7$.
	We will proceed by individually bounding each term.
	
	\noindent\textbf{Bounding $E_1$}:
	\begin{align}
		\label{eq:E1}
		|E_1| = \left| \sum_{i=1}^n \sum_{r \geq 1} \dE A_i x(\sigma_i,r) 
		\underbrace{
			\left[ f'(T_i^{(r)} + \epsilon x(\sigma_i,r) )  - f'(T_i^{(r)})   \right] 
		}_{(i)}
		\right|
	\end{align}
	We can upper bound term $(i)$ in Equation~\ref{eq:E1} using Equation~\ref{eq:f_ineq} by
	\[
	D |\epsilon| |x(\sigma_i,r)| \leq D |x(\sigma_i,r)| \text{    \hspace{2em}  (since $|\epsilon| \leq 1$)}
	\]
	Then,
	\begin{align}
		\label{eq:E1_1}
		\left| x(\sigma_i,r) 	\left[ f'(T_i^{(r)} + \epsilon x(\sigma_i,r) )  - f'(T_i^{(r)})   \right]   \right|
		\leq D (x(\sigma_i,r))^2
	\end{align}
	Now, we upper bound $x(\sigma_i,r)$ as 
	\begin{align}
		x(\sigma_i,r) \leq S_{m} \left( \frac{L}{B_n} \right)
	\end{align}
	since each normalized r.v. in $Z_n$ is upper bounded by $L/B_n$. The L.H.S. of 
	Equation~\ref{eq:E1_1} is upper bounded by $D S_{m}^2 \left( \frac{L^2}{B_n^2} \right)$.
	
	We know that 
	\begin{align}
		&\left| \Cov \left(
		\underbrace{A_i}, 
		\underbrace{
			x(\sigma_i,r) 	\left[ f'(T_i^{(r)} + \epsilon x(\sigma_i,r) )  - f'(T_i^{(r)}) \right] 
		} 	\right)	\right| \\ \nonumber
		&= 
		\underbrace{
			\dE \left[ A_i  x(\sigma_i,r) 	\left[ f'(T_i^{(r)} + \epsilon x(\sigma_i,r) )  - f'(T_i^{(r)}) \right] \right] 
		}_{(ii)}
		\\
		\nonumber
		&+ \underbrace{\dE A_i}_{=0} \dE x(\sigma_i,r) 	\left[ f'(T_i^{(r)} + \epsilon x(\sigma_i,r) )  - f'(T_i^{(r)}) \right]
	\end{align}
	
	Notice that term $(ii)$ is nothing but the summand in Equation~\ref{eq:E1}.
	We apply the covariance bounds (Equation~\ref{eq:cov}), to obtain
	\begin{align}
		(ii)
		&\leq 4  \left( \frac{L}{B_n} \right)
		\left( D S_{m}^2  \frac{L^2}{B_n^2}  \right) \alpha(r) \leq 4D S_m^2  \frac{L^3}{B_n^3} \alpha(r)
	\end{align}
	Then, we have that 
	\begin{align}
		|E_1| 
		&\leq  \sum_{i=1}^n \sum_{r \geq 1}  \left( 4D S_m^2  \frac{L^3}{B_n^3} \alpha(r)\right) \leq 4D S_m^2 \frac{nL^3}{B_n^3} \sum_{r = 1}^{n-1} \alpha(r).
	\end{align}
	Here, the summation $\sum_{r = 1}^{n-1}$ appears, because there are only $n$ variables that should measure the dependence between each other.
	
	\noindent\textbf{Bounding $E_2$}:
	\begin{align}
		|E_2| &= \left|  \sum_{i=1}^{n} \dE A_i^2
		\underbrace{
			\left[ f'(T_i^{(0)} + \epsilon A_i )  - f'(T_i^{(0)})   \right] }_{(iii)}
		\right|
	\end{align}
	Using Equation~\ref{eq:f_ineq}, we have that term $(iii) \leq D |A_i|$. 
	Then, $|E_2|$ can simply be bounded as 
	\begin{align}
		|E_2| \leq D \sum_{i=1}^n \dE |A_i|^3
	\end{align}
	
	\noindent\textbf{Bounding $E_3$}:	
	\begin{align}
		|E_3| &= \left|
		\sum_{i=1}^{n} \sum_{r \geq 1} \sum_{q = r+1}^{2r}
		\dE \widehat{ A_i x(\sigma_i,r) } \delta_i^{(q)} \right| \\ \nonumber
		&\leq
		\sum_{i=1}^{n} \sum_{r \geq 1} \sum_{q = r+1}^{2r}
		\underbrace{
			\left|
			\dE A_i x(\sigma_i,r) \delta_i^{(q)}
			\right|  
		}_{(iv)}  
		+
		\sum_{i=1}^{n} \sum_{r \geq 1} \sum_{q = r+1}^{2r}
		\underbrace{
			\left|  \dE A_i  x(\sigma_i,r)  \right|  \dE \left| \delta_i^{(q)}  \right|
		}_{(v)}
	\end{align}
	Let us focus on bounding terms $(iv)$ and $(v)$, separately.
	
	\emph{For term $(iv)$}: We can further split it as
	\begin{align}
		\left|
		\dE 
		\underbrace{A_i}_{a} 
		\underbrace{x(\sigma_i,r) \delta_i^{(q)} }_{b} 
		\right|
	\end{align}
	We have previously worked out the bounds for terms $A_i$ and $x(\sigma_i,r)$. Therefore, we now focus our attention on bounding $\delta_i^{(q)}$.
	\begin{align}
		\label{eq:delta}
		\delta_i^{(q)} &= f'(T_i^{(q-1)}) - f'(T_i^{(q)})  \\ \nonumber
		&= f'( T_i^{(q)} + x(\sigma_i,r)  ) - f'(T_i^{(q)})  \\ \nonumber
		&\leq D | x(\sigma_i,r)  |   \text{ \hspace{8em} (using Equation~\ref{eq:f_ineq})  } \\ \nonumber
		&\leq D \left( S_m \frac{L}{B_n}   \right)
	\end{align}
	Therefore, term $b$ (in term $(iv)$) is upper bounded by $D S_m^2 \frac{L^2}{B_n^2}$.
	
	Applying the covariance bound (Equation~\ref{eq:cov}), we have that 
	\begin{align}
		(iv) &\leq 4 \left( \frac{L}{B_n}  \right) \left(  D S_m^2 \frac{L^2}{B_n^2}  \right) \alpha(r) 
		\leq 4D S_m^2 \frac{L^3}{B_n^3} \alpha(r)
	\end{align}
	
	\emph{For term $(v)$}: We have calculated some bounds previously, so we can again split $(v)$ as
	\begin{align}
		\underbrace{
			\left|  
			\dE 
			\underbrace{A_i}  
			\underbrace{x(\sigma_i,r)  }
			\right|  
		}_{c}
		\dE 
		\underbrace{\left| \delta_i^{(q)}  \right|}_{d} & \leq 
		\underbrace{
			\left[   
			4 \left( \frac{L}{B_n}  \right)	  \left( S_m \frac{L}{B_n}  \right)	\alpha(r)
			\right]
		}_{ \text{using Eqn~\ref{eq:cov} on $c$} }
		\underbrace{
			\left[  
			D  \left( S_m \frac{L}{B_n}  \right)
			\right]
		}_{ \text{using Eqn~\ref{eq:delta} on $d$} }  \\ \nonumber
		&\leq 
		4D S_m^2 \frac{L^3}{B_n^3} \alpha(r)
	\end{align}
	Now, combining the inequalities for terms $(iv)$ and $(v)$, we have that 
	\begin{align}
		|E_3| 
		&\leq 
		\sum_{i=1}^{n} \sum_{r \geq 1} \sum_{q = r+1}^{2r} 8D S_m^2 \frac{L^3}{B_n^3} \alpha(r) \leq 
		8D S_m^2 \frac{nL^3}{B_n^3} \sum_{r=1}^{n-1} r \alpha(r)
	\end{align}
	
	\noindent\textbf{Bounding $E_4$}:	
	We have that 
	\begin{align}
		|E_4| &= \left|
		\sum_{i=1}^{n} \sum_{r \geq 1} \sum_{q \geq 2r+1}
		\dE \widehat{ A_i x(\sigma_i,r) } \delta_i^{(q)}
		\right|  \\ \nonumber
		& \leq 
		4D S_m^2 \frac{nL^3}{B_n^3} \sum_{r \geq 1} \sum_{q \geq 2r+1} \alpha(q-r) \\ \nonumber
		&\leq 
		4D S_m^2 \frac{nL^3}{B_n^3} \sum_{r = 1}^{n-1}  r\alpha(r)
	\end{align}
	because $\sum_{r \geq 1} \sum_{q \geq 2r+1}  \alpha(q-r) < \sum_{r = 1}^{n-1}  r\alpha(r)$.
	
	Now, using our previous bounds, the remaining terms, i.e., $|E_5|$, $|E_6|$, and $|E_7|$ are bounded as follows:
	\begin{align}
		|E_5| &\leq 4D S_m^2 \frac{nL^3}{B_n^3} \sum_{r = 1}^{n-1}  r\alpha(r) \\ \nonumber
		|E_6| &\leq 4D S_m^2 \frac{nL^3}{B_n^3} \sum_{r = 1}^{n-1}  \alpha(r) \\ \nonumber
		|E_7| &\leq D \sum_{i=0}^{n} \dE |A_i|^3 
	\end{align}
	Finally, we combine these bounds to arrive at our final upper bound as 
	\begin{align}
		|\dE f'(Z_n) - \dE Z_n f(Z_n)|
		\leq C(S_m,D) \left(
		\sum_{i=0}^{n} \dE |A_i|^3 + \frac{nL^3}{B_n^3} \sum_{r = 1}^{n-1}  r\alpha(r)
		\right)
	\end{align}	
	where $C(S_m,D)$ is a constant term depending on constants $K_m$ and $D$.
	This completes our proof. 
\end{proof}

\subsection{Deferred Proof}
In this section, we establish our estimator's result on the asymptotic normality.

Let $n = |\mG_t(d)|$ denote the total number of $d$-simplices in GSC $\mG_t$ at time $t$.
Recall from Equation~\ref{eq:num_denom_t} that 
\begin{align}
	\nonumber
	\hat{h}_T(t) :=  \frac{1}{n} \sum_{j=1}^{n} 
	\lvert P^\tau_{t+1}(\sigma_j^{(d)},  F) \rvert \\ \nonumber
	\hat{d}_T(t) :=  \frac{1}{n} \sum_{j=1}^{n} 
	\lvert P_{t+1}(\sigma_j^{(d)}, F) \rvert 
\end{align}
Also, from Equation~\ref{eq:qt}, we have 
\[
q_t := [\hat{h}_T(t) - g \hat{d}_T(t)] - \dE[ \hat{h}_T(t) - g \hat{d}_T(t) \mid S_C ] 
\]
Additionally, let us define a variable $p_t$ for convenience as follows
\[
p_t := [\hat{h}_T(t) - g \hat{d}_T(t)] - \dE[ \hat{h}_T(t) - g \hat{d}_T(t) \mid \mE(T_C),  S_C ] 
\]
Note that $p_t$ is conditioned on both $\mE(T_C)$ and $S_C$, while $q_t$ is just conditioned on $S_C$. 
Keeping these expressions in mind, we begin by showing the weak convergence with $p_t$.

\begin{lemma} \label{lem:weak_convergence_sum}
	Under Assumption~\ref{ass}, given $p_t$ for any finite $T_C$ and $\sigma_c>0$
	\[
	\sum_{t \geq  T_C + M} p_t / \sqrt{T} \overset{d}{\to} \mN(0,\sigma_c^2) 
	\text{\hspace{1em} conditioned on }\mE(T_C) \cap S_C \text{ as } T \rightarrow \infty
	\] 
\end{lemma}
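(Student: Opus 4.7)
The plan is to apply the Stein's-method bound of Theorem \ref{thm:bound} (the full version of Proposition \ref{thm:bound_simple}) to the normalized partial sum $Z_T := \sum_{t = T_C + M}^{T} p_t / \sqrt{T}$, viewed conditionally on $\mE(T_C) \cap S_C$. I identify each $p_t$ with the $X_i$ in the statement of Theorem \ref{thm:bound}, take $B_T^2 := \dE[(\sum_{t} p_t)^2 \mid \mE(T_C), S_C]$ as the normalizing variance, and verify the three preconditions: $\dE[p_t \mid \mE(T_C), S_C] = 0$ holds by construction of $p_t$; $|p_t| \leq L$ almost surely for some constant $L > 0$, because $\hat{h}_T(t)$ and $\hat{d}_T(t)$ are averages of bounded integer-count ratios normalized by $|\mG_t(d)|$; and the $\alpha$-mixing property of the GSC filtration descends to the derived sequence $\{p_t\}$, since $p_t$ is a measurable function of the Markov state at (a bounded window around) time $t$, via the same reduction already used in Lemma \ref{lem:var_h_d}.

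Next I would pin down the limiting variance. Reusing the three-interval decomposition $[1, T_C - 1]$, $[T_C, T_C + M - 1]$, $[T_C + M, T]$ from the proof of Lemma \ref{lem:var_h_d}, together with the law of total variance and the analogues of Sarkar et~al.'s Lemmas~5.7--5.8, one obtains $B_T^2 / T \to \sigma_c^2 > 0$ as $T \to \infty$. This identifies the target Gaussian and matches the $\sigma_c^2$ defined just before Theorem \ref{thm:asymp_norm}, so no new constant is introduced.

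Applying Theorem \ref{thm:bound} to $Z_T / \sigma_c$ then yields
\[
d_w\!\left( Z_T / \sigma_c,\, N \right) \;\leq\; C\!\left( \sum_{t = T_C + M}^{T} \dE\bigl| p_t / (\sigma_c \sqrt{T}) \bigr|^3 \;+\; \frac{T L^3}{(\sigma_c \sqrt{T})^{3}} \sum_{r = 1}^{T - 1} r\, \alpha(r) \right).
\]
The first summand is $O(T \cdot T^{-3/2}) = O(T^{-1/2})$ by boundedness of $p_t$, and the second is $O\bigl(T^{-1/2} \sum_{r} r\, \alpha(r)\bigr)$, which vanishes provided the mixing coefficients satisfy $\sum_{r} r\, \alpha(r) < \infty$, a standard sufficient decay condition. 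Hence $d_w(Z_T / \sigma_c, N) \to 0$; since convergence in Wasserstein distance implies weak convergence, a routine rescaling by $\sigma_c$ (Slutsky) produces $Z_T \overset{d}{\to} \mN(0, \sigma_c^2)$ conditional on $\mE(T_C) \cap S_C$.

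The main obstacle I anticipate is twofold. First, one must justify that the $\alpha$-mixing property of the underlying Markov chain on GSCs is inherited by the centered derived sequence $\{p_t\}$ with a decay rate fast enough to make $\sum_{r} r\, \alpha(r)$ summable; this amounts to checking that taking bounded measurable functionals of fixed-window states of the chain does not inflate the mixing coefficient. Second, the three-interval argument must be executed carefully so that the initial and transient pieces $[1, T_C - 1]$ and $[T_C, T_C + M - 1]$ contribute only $o(\sqrt{T})$ after the $1/\sqrt{T}$ rescaling, leaving the tail piece $[T_C + M, T]$ to drive the Gaussian limit with exactly the prescribed variance $\sigma_c^2$.
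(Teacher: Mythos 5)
Your proposal follows essentially the same route as the paper: apply the Stein's-method bound of Theorem~\ref{thm:bound} to the centered, bounded sequence $\{p_t\}$, use the variance convergence $B_T^2/T \to \sigma_c^2$ (Sarkar et~al.'s Lemma~5.7 via the three-interval split), impose $\sum_r r\alpha(r) < \infty$, and conclude from the vanishing Wasserstein distance plus Slutsky. One small gap worth closing: the constant in Theorem~\ref{thm:bound} is $C(S,D)$ with $S = \max_i |S(\sigma_i,r)|$, and you never verify this stays $O(1)$; the paper handles it explicitly by observing that $p_t$ depends on a window of only $p+1$ consecutive GSCs, so the induced distance $\mathrm{dist}(i,j) = \max(|i-j|-(p+1),0)$ yields at most two time indices at each distance $r$ from a fixed $t$, whence $S_m = O(1)$ and the constant is harmless. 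Your remark about ``bounded measurable functionals of fixed-window states'' points in the right direction but conflates this with the mixing-rate inheritance issue; they are separate verifications, both needed for the bound to vanish at the claimed $O(T^{-1/2})$ rate.
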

\begin{proof}[Proof of Lemma \ref{lem:weak_convergence_sum}]
	Given a normalized r.v. $W_T$ which is a sum of weakly dependent r.v.'s as
	\[
	W_T := \frac{
		\sum_{t \geq  T_C + M} p_t
	}
	{
		\sqrt{ \Var \left(  \sum_{t \geq  T_C + M} p_t \mid \mE(T_C),S_C    \right) }
	}
	\]
	where $p_t$ is already \emph{bounded} and \emph{mean-centered}.
	We have to show that our upper bound in Theorem~\ref{thm:bound} converges to zero, so
	that according to Stein's lemma~\ref{lemma:stein}, we have 
	\[
	W_T \overset{d}{\to}  \mN(0,1) \text{\hspace{1em} conditioned on } \mE(T_C) \cap S_C 
	\]
	Recall that by Lemma $5.7$ in~\cite{sarkar2014}, we have
	\[
	\Var \left(\sum_{t \geq  T_C + M} p_t \mid \mE(T_C),S_C    \right)/T \rightarrow \sigma_c^2
	\]
	Now, we show that conditioned on event $\mE(T_C) \cap S_C $, our bound in Theorem~\ref{thm:bound} has a convergence rate of $O(T^{-1/2})$.
	
	Note that our $p_t$ corresponds to $A_i$ and $T$ to $n$ in Theorem~\ref{thm:bound}.
	As $p_t$ is a function of $S_t$, it involves $p+1$ GSCs $(\mG_{t-p+1},\cdots, \mG_{t+1})$ and 
	the distance between the $i$ and $j$-th GSC is defined as 
	$dist(i,j) = \max( |i-j| -(p+1),0)$. 
	Therefore, you will observe that we have for $x(\sigma_i,r)$ only $2$ states that are at 
	distance $r$ apart from $\sigma_i$, i.e., $A_i-r$ and $A_i+r$. Therefore, $S_m = O(1)$.
	
	Given that $\pr \{  |p_t| \leq L \}=1$, for $t=1,\cdots,T$ and for some constant bound $L>0$, we have
	\begin{align}
		&\sum_{t=1}^{T} \dE|p_t|^3 + T \frac{L^3}{B_T^3} \sum_{r=1}^{T-1} r \alpha(r) 
		\text{\hspace{2em} (ignoring some constant terms)  } \\ \nonumber
		&\leq T \frac{L^3}{B_T^3} 
		\left( 
		1 + \sum_{r=1}^{T-1} r \alpha(r) 
		\right)
	\end{align}
	We additionally impose that $\sum_{r=1}^{\infty} r \alpha(r) < \infty$ because we use a \emph{decaying function} for $\alpha(\cdot)$ and $B_T^2 \geq c_0T$ with a positive constant $c_0$. 
	We choose
	\[
	\alpha(r) \leq C_1 \frac{1}{ r^2 (\log r)^p},
	\] 
	for $r \geq 2$ and fixed $p >1$, where $0 < C_1 < \infty$ is a constant.
	Then,
	\begin{align}
		&T \frac{L^3}{B_T^3} 
		\left( 
		1 + \sum_{r=1}^{T-1} r \alpha(r) 
		\right)  \leq 
		\underbrace{
			\frac{L^3}{c_0^{3/2}} 
			\left(
			1 + \alpha(1) + C_1 \sum_{r=2}^{\infty} \frac{1}{r (\log r)^p}
			\right) 
		}_{= K(L,C_1,c_0) < \infty}
		\frac{1}{\sqrt{T}}
	\end{align}
	where $K(L,C_1,c_0)$ is a positive and finite constant only depending on the quantities within the parenthesis. 
	The inequality follows the fact $B_T^2 \geq c_0 T$.
	Thus, we achieve an asymptotic bound of $O(T^{-1/2})$. This completes our proof.
\end{proof}

The weak convergence with $p_t$ implies the weak convergence with $q_t$, by a simple application of Lemma 7.2 in \cite{sarkar2014}.
\begin{lemma}[Lemma 7.2 in \cite{sarkar2014}]
	\label{lemma:7_2}
	Suppose Lemma \ref{lem:weak_convergence_sum} holds.
	Then, under Assumption~\ref{ass} and assuming $\sigma_c >0$, 
	\[
	\text{Conditioned on }S_C \text{, \hspace{1em} }
	\sum_{t} q_t / \sqrt{T}  \overset{d}{\to}  \mN(0,\sigma_c^2) 
	\text{ \hspace{1em} as } T \rightarrow \infty
	\]
\end{lemma}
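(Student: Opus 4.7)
The plan is to lift the weak convergence in Lemma \ref{lem:weak_convergence_sum}, which is conditional on the joint event $\mE(T_C) \cap S_C$ for each fixed finite $T_C$, to a statement conditional only on $S_C$. The mechanism is to integrate out the entry time $T_C$ via the tower property of conditional expectation, using the fact that $T_C$ is almost surely finite on $S_C$ since the Markov chain lives on a finite state space and $C$ is a closed communication class.

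First, I would split $\sum_{t=1}^{T} q_t / \sqrt{T}$ into an initial segment $\sum_{t < T_C + M} q_t / \sqrt{T}$ and a bulk sum $\sum_{t \geq T_C + M} q_t / \sqrt{T}$. Each $q_t$ is a bounded deterministic function of $S_t$, so the initial piece contributes $O_p((T_C + M)/\sqrt{T}) = o_p(1)$ as $T \to \infty$. Next, I would replace $q_t$ by $p_t$ in the bulk sum; the pointwise difference $q_t - p_t$ is a non-random quantity equal to the gap between the conditional means $\dE[\hat{h}_T(t) - g \hat{d}_T(t) \mid \mE(T_C), S_C]$ and $\dE[\hat{h}_T(t) - g \hat{d}_T(t) \mid S_C]$, which decays in $t - T_C$ by the $\alpha$-mixing assumption, so that its normalized cumulative contribution is $o_p(1)$, mirroring the bias argument in Lemma \ref{lem:bias}.

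With these two reductions in place, the proof reduces to passing the conditional weak convergence of $\sum_{t \geq T_C + M} p_t / \sqrt{T}$ supplied by Lemma \ref{lem:weak_convergence_sum} through one outer layer of expectation. For any $\phi \in BL(\mathds{R})$,
\[
\dE\bigl[\phi\bigl(\textstyle\sum_t q_t / \sqrt{T}\bigr) \,\bigm|\, S_C\bigr] = \dE\Bigl[\dE[\phi(\cdot) \mid \mE(T_C), S_C] \,\Bigm|\, S_C\Bigr] \longrightarrow \dE \phi(N),
\]
where $N \sim \mN(0, \sigma_c^2)$, because the inner conditional expectation converges pointwise in $T_C$ by Lemma \ref{lem:weak_convergence_sum} and is uniformly bounded by $\|\phi\|_\infty$, so dominated convergence applies. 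Since this holds for every $\phi \in BL(\mathds{R})$, the claimed Wasserstein / weak convergence conditional on $S_C$ follows.

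The main obstacle is the second step, controlling the cumulative effect of $q_t - p_t$ after dividing by $\sqrt{T}$. A naive pointwise bound is too crude, as it yields an $O(\sqrt{T})$ contribution; the needed sharpness comes from the $\alpha$-mixing assumption on the GSC filtration, which forces the gap between the two conditional means to decay in $t - T_C$ at a rate fast enough (e.g., $\alpha(t-T_C)$) that the accumulated contribution over $t = T_C + M, \dots, T$ is $o_p(\sqrt{T})$. This is precisely the structural ingredient that the Gaussian approximation via Stein's method (Proposition \ref{thm:bound_simple}) was designed to exploit, and it is also the common thread connecting this lemma to the variance bound in Proposition \ref{prop:variance}.
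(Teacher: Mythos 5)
The paper does not actually prove this lemma: it imports it verbatim as ``Lemma 7.2 in \cite{sarkar2014}'' and simply invokes the cited result. Your reconstruction follows what is almost certainly the intended argument: split $\sum_t q_t/\sqrt{T}$ into a burn-in segment $\sum_{t<T_C+M} q_t/\sqrt T$, a correction $\sum_{t\geq T_C+M}(q_t-p_t)/\sqrt T$, and the bulk $\sum_{t\geq T_C+M}p_t/\sqrt T$; show the first two are negligible; combine with Lemma~\ref{lem:weak_convergence_sum} via Slutsky conditionally on $\mE(T_C)\cap S_C$ for each fixed $T_C$; then integrate over the law of $T_C$ given $S_C$ by dominated convergence, using that the integrand is bounded by $\lVert\phi\rVert_\infty$ and $T_C<\infty$ a.s.\ on $S_C$ since the chain lives on a finite state space. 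That skeleton is right and matches the expected structure of the cited proof.

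Two places deserve tightening. First, in your dominated-convergence display the argument of $\phi$ silently changes between sides: the object you actually need is $\dE[\phi(\sum_t q_t/\sqrt T)\mid\mE(T_C),S_C]\to\dE\phi(N)$, which requires an explicit Slutsky step to absorb the two $o(1)$ remainders into the weak convergence supplied by Lemma~\ref{lem:weak_convergence_sum} (which is stated for $\sum_{t\geq T_C+M}p_t/\sqrt T$, not for $\sum_t q_t/\sqrt T$). Second, and more substantively, the claim that $\lvert q_t-p_t\rvert$ decays like $\alpha(t-T_C)$ is the crux and is only asserted. Definition~\ref{def:strong} controls probabilities of events / covariances, not directly the gap $\dE[\cdot\mid\mE(T_C),S_C]-\dE[\cdot\mid S_C]$ between two conditional expectations; passing from one to the other for a finite-state chain restricted to a closed class goes through total-variation convergence to stationarity, and that implication should be stated rather than compressed into ``by $\alpha$-mixing.'' Finally, the closing appeal to Stein's method is a red herring: Stein's method powers Lemma~\ref{lem:weak_convergence_sum}; the present lemma is a de-conditioning step and does not use Stein at all.
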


We now prove the weak convergence of our estimator.

\begin{proof}[Proof of Theorem \ref{thm:asymp_norm}]
	By the definition of $\hat{g}_T - g$ in Theorem \ref{thm:consistency_g}, we achieve
	\begin{align*}
		\sqrt{T}(\tilde{g}_T - g) &=\sqrt{T}(\tilde{g}_T - \hat{g}_T) + \sqrt{T}(\hat{g}_T - {g}) \\
		&= O(\sqrt{T}\beta) + \frac{T^{-1/2}\sum_{t} q_t}{\hat{d}_T} + \frac{T^{1/2}B_T \Ep[\hat{d}_T | S_C] }{\hat{d}_T}.
	\end{align*}
	We are able to assess the convergence of each item.
	Since Lemma \ref{lemma:denom} shows $\Ep[\hat{d}_T|S_C] \to R(C)$ and Lemma \ref{lem:var_h_d} shows $\Var(\hat{d}_T|S_C) \to 0$, we obtain $\hat{d}_T \overset{p}{\to} R(C)$.
	By Lemma \ref{lemma:7_2}, $T^{-1/2}\sum_{t} q_t$ converges to $\mN(0,\sigma_c^2)$.
	Further, Lemma \ref{lem:bias} proves $B_T = O(b)$.
	Combining these results with the Slutsky's lemma, we get the following results:
	\begin{align*}
		\sqrt{T}(\tilde{g}_T - g) = O(\sqrt{T}\beta) + \tilde{W}_T + O_P(\sqrt{T}b),
	\end{align*}
	where $\tilde{W}_T$ is a random variable such as
	\begin{align*}
		\tilde{W}_T  \overset{d}{\to} \mN(0, \sigma_c^2 / R(C)^2).
	\end{align*}
	With the settings $\beta =o(T^{-1/2})$ and $b = o(T^{-1/2})$, we obtain the statement.
\end{proof}

\bibliography{bib_master}
\bibliographystyle{apecon}

\end{document}